%%
%% This is file `sample-sigconf.tex',
%% generated with the docstrip utility.
%%
%% The original source files were:
%%
%% samples.dtx  (with options: `sigconf')
%% 
%% IMPORTANT NOTICE:
%% 
%% For the copyright see the source file.
%% 
%% Any modified versions of this file must be renamed
%% with new filenames distinct from sample-sigconf.tex.
%% 
%% For distribution of the original source see the terms
%% for copying and modification in the file samples.dtx.
%% 
%% This generated file may be distributed as long as the
%% original source files, as listed above, are part of the
%% same distribution. (The sources need not necessarily be
%% in the same archive or directory.)
%%
%% The first command in your LaTeX source must be the \documentclass command.
%\documentclass[sigconf, anonymous]{acmart}
%%\documentclass[sigconf, anonymous]{acmart}
%\settopmatter{printacmref=False} % Removes citation information below abstract
%\renewcommand\footnotetextcopyrightpermission[1]{} % removes footnote with conference information in first column
%\pagestyle{plain} % removes running headers

\documentclass[sigconf]{acmart}

\usepackage{multirow}
\usepackage{subfigure}
\usepackage{algorithm}
\usepackage{algorithmic}

\usepackage{bm}
\usepackage{amsmath}
\usepackage{enumitem}

\newtheorem{theorem}{Theorem}
\newtheorem{Definition}{Definition}

\usepackage{pdftexcmds}
\usepackage{catchfile}
\usepackage{ifluatex}
\usepackage{ifplatform}
\usepackage{caption}
\usepackage{color}
\usepackage{nicefrac}
\usepackage{caption}
\usepackage{booktabs}

%%
%% \BibTeX command to typeset BibTeX logo in the docs
\AtBeginDocument{%
\providecommand\BibTeX{{%
		\normalfont B\kern-0.5em{\scshape i\kern-0.25em b}\kern-0.8em\TeX}}}
	
\usepackage{array}
\newcolumntype{L}[1]{>{\raggedright\let\newline\\\arraybackslash\hspace{0pt}}m{#1}}
\newcolumntype{C}[1]{>{\centering\let\newline  \\\arraybackslash\hspace{0pt}}m{#1}}
\newcolumntype{R}[1]{>{\raggedleft\let\newline \\\arraybackslash\hspace{0pt}}m{#1}}

\settopmatter{printacmref=false} % Removes citation information below abstract
\renewcommand\footnotetextcopyrightpermission[1]{} % removes footnote with conference information in first column
\pagestyle{plain} % removes running headers

\hyphenpenalty = 5000
\tolerance = 2000

%%
%% end of the preamble, start of the body of the document source.
\begin{document}

%%
%% The "title" command has an optional parameter,
%% allowing the author to define a "short title" to be used in page headers.
\title{Searching to Sparsify Tensor Decomposition for \\ N-ary Relational Data}

%%
%% The "author" command and its associated commands are used to define
%% the authors and their affiliations.
%% Of note is the shared affiliation of the first two authors, and the
%% "authornote" and "authornotemark" commands
%% used to denote shared contribution to the research.
%\author{Ben Trovato}
%\authornote{Both authors contributed equally to this research.}
%\email{trovato@corporation.com}
%\orcid{1234-5678-9012}
%\author{G.K.M. Tobin}
%\authornotemark[1]
%\email{webmaster@marysville-ohio.com}
%\affiliation{%
%  \institution{Institute for Clarity in Documentation}
%  \streetaddress{P.O. Box 1212}
%  \city{Dublin}
%  \state{Ohio}
%  \postcode{43017-6221}
%}
%
\author{Shimin DI
}
\affiliation{%
  \institution{The Hong Kong University of Science and Technology}
  %\streetaddress{1 Th{\o}rv{\"a}ld Circle}
  \city{Hong Kong SAR}
  \country{China}}
\email{sdiaa@cse.ust.hk}

\author{Quanming YAO$^*$}
\affiliation{
	\institution{4Paradigm Inc. \\
		EE, Tsinghua University}
	\city{Beijing, China}
	}
\email{qyaoaa@connect.ust.hk}

\author{Lei CHEN
}
\affiliation{%
	\institution{The Hong Kong University of Science and Technology}
	%\streetaddress{1 Th{\o}rv{\"a}ld Circle}
	\city{Hong Kong SAR}
	\country{China}}
\email{leichen@cse.ust.hk}

%
%\author{Aparna Patel}
%\affiliation{%
% \institution{Rajiv Gandhi University}
% \streetaddress{Rono-Hills}
% \city{Doimukh}
% \state{Arunachal Pradesh}
% \country{India}}
%
%\author{Huifen Chan}
%\affiliation{%
%  \institution{Tsinghua University}
%  \streetaddress{30 Shuangqing Rd}
%  \city{Haidian Qu}
%  \state{Beijing Shi}
%  \country{China}}
%
%\author{Charles Palmer}
%\affiliation{%
%  \institution{Palmer Research Laboratories}
%  \streetaddress{8600 Datapoint Drive}
%  \city{San Antonio}
%  \state{Texas}
%  \postcode{78229}}
%\email{cpalmer@prl.com}
%
%\author{John Smith}
%\affiliation{\institution{The Th{\o}rv{\"a}ld Group}}
%\email{jsmith@affiliation.org}
%
%\author{Julius P. Kumquat}
%\affiliation{\institution{The Kumquat Consortium}}
%\email{jpkumquat@consortium.net}

%%
%% By default, the full list of authors will be used in the page
%% headers. Often, this list is too long, and will overlap
%% other information printed in the page headers. This command allows
%% the author to define a more concise list
%% of authors' names for this purpose.
%\renewcommand{\shortauthors}{Trovato and Tobin, et al.}

%%
%% The abstract is a short summary of the work to be presented in the
%% article.
\begin{abstract}
Tensor, an extension of the vector and matrix to the multi-dimensional case, is a natural way to describe the N-ary relational data. Recently, tensor decomposition methods have been introduced into N-ary relational data and become state-of-the-art on embedding learning. However, the performance of existing tensor decomposition methods is not as good as desired. First, they suffer from the data-sparsity issue since they can only learn from the N-ary relational data with a specific arity, i.e., parts of common N-ary relational data. Besides, they are neither effective nor efficient enough to be trained due to the over-parameterization problem. In this paper, we propose a novel method, i.e., S2S, for effectively and efficiently learning from the N-ary relational data. Specifically, we propose a new tensor decomposition framework, which allows embedding sharing to learn from facts with mixed arity. 
Since the core tensors may still suffer from the over-parameterization,
we propose to reduce parameters by sparsifying the core tensors while retaining their expressive power using neural architecture search (NAS) techniques, which can search for data-dependent architectures.
As a result, the proposed S2S not only guarantees to be expressive but also efficiently learns from mixed arity. Finally, empirical results have demonstrated that S2S is efficient to train and achieves state-of-the-art performance.
\footnote{The work is done when S. Di was an intern at 4Paradigm Inc and mentored by Q. Yao;
	and Q. Yao is the correspondence author.}
\end{abstract}

%%
%% The code below is generated by the tool at http://dl.acm.org/ccs.cfm.
%% Please copy and paste the code instead of the example below.
%%
%\begin{CCSXML}
%<ccs2012>
% <concept>
%  <concept_id>10010520.10010553.10010562</concept_id>
%  <concept_desc>Computer systems organization~Embedded systems</concept_desc>
%  <concept_significance>500</concept_significance>
% </concept>
% <concept>
%  <concept_id>10010520.10010575.10010755</concept_id>
%  <concept_desc>Computer systems organization~Redundancy</concept_desc>
%  <concept_significance>300</concept_significance>
% </concept>
% <concept>
%  <concept_id>10010520.10010553.10010554</concept_id>
%  <concept_desc>Computer systems organization~Robotics</concept_desc>
%  <concept_significance>100</concept_significance>
% </concept>
% <concept>
%  <concept_id>10003033.10003083.10003095</concept_id>
%  <concept_desc>Networks~Network reliability</concept_desc>
%  <concept_significance>100</concept_significance>
% </concept>
%</ccs2012>
%\end{CCSXML}
%
%\ccsdesc[500]{Computer systems organization~Embedded systems}
%\ccsdesc[300]{Computer systems organization~Redundancy}
%\ccsdesc{Computer systems organization~Robotics}
%\ccsdesc[100]{Networks~Network reliability}

%%
%% Keywords. The author(s) should pick words that accurately describe
%% the work being presented. Separate the keywords with commas.
\keywords{Knowledge Graph, N-ary Relational Data, 
	Tensor Decomposition, Neural Architecture Search}

%% A "teaser" image appears between the author and affiliation
%% information and the body of the document, and typically spans the
%% page.

%%
%% This command processes the author and affiliation and title
%% information and builds the first part of the formatted document.
\maketitle

\section{Introduction}
\label{sec:intro}

%\footnote{
%\# shimin: emphasize the relevance between this paper to webconf
%}
%{\color{blue}
As an important way to explore and organize human knowledge, web-scale knowledge bases (KBs, i.e., N-ary relational data) \cite{suchanek2007yago,auer2007dbpedia,bollacker2008freebase} has promoted a series of web applications, e.g., semantic search~\cite{xiong2017explicit}, question answering~\cite{lukovnikov2017neural}, and recommendation system~\cite{zhang2016collaborative,cao2019unifying}.
Generally, 
the N-ary relational data contains n-ary facts, that is formed by $n$ entities with a relation $r$ such as $(r, e_1,\cdots,e_n)$ (i.e., arity is $n$).
For example, \textit{playedCharacterIn} is one of common 3-ary relations, involved with an actor, a character, and a movie in a 3-ary fact (\textit{playedCharacterIn, LeonardNimoy, Spock, StarTrek 1}).
Given a fact, the link prediction task is one of the crucial tasks in the N-ary relational data, which is to verify whether a fact is plausible or not.
Previous studies~\cite{bordes2013translating,yang2014embedding,dettmers2018convolutional,kazemi2018simple,zhang2019autosf} focus on handling the link prediction task on a special case of the N-ary relational data, knowledge graphs (KGs, i.e., binary relational data) \cite{nickel2015review,wang2017knowledge}.
Recently, how to handle the general N-ary relational data has attracted lots of attention~\cite{wen2016representation,zhang2018scalable,guan2019link,rosso2020beyond,fatemi2019knowledge,guan2020neuinfer}.
%Although the link prediction task on  has been widely studied , 
%these works cannot fully leverage KBs since binary relational data is a special case of N-ary relational data.
%Especially, [Rb-2] discusses the importance of KG completion (i.e., link prediction task) to the application of KGs in recommendation system. 
%In this paper, we propose an efficient and effective method for the link prediction task on general knowledge bases, so as to expand the application of KBs to various scenarios. We also demonstrate the capability of the proposed S2S on the benchmark datasets built on the web-scale knowledge bases, such as Wikidata and Freebase. 
Firstly, it is essential to handle hyper-relational facts
(i.e., n-ary facts with $n>2$)
because they are very common in KBs.
It has been reported in \cite{wen2016representation} that more than 30\% of the entities in Freebase \cite{bollacker2008freebase} involves in the hyper-relational facts.
Moreover, the facts with high-arity may provide benefits in the question answering scenario \cite{ernst2018highlife} since it usually contains more complete information compared with binary facts.
%}

%Recently, as an important way to explore and organize human knowledge, 
%the N-ary relational data has attracted lots of attention~\cite{wen2016representation,zhang2018scalable,guan2019link,rosso2020beyond,fatemi2019knowledge,guan2020neuinfer}.
%The N-ary relational data contains n-ary facts, that is formed by $n$ entities with a relation $r$ such as $(r, e_1,\cdots,e_n)$ (i.e., arity is $n$).
%For example, \textit{playedCharacterIn} is one of common 3-ary relations, involved with an actor, a character, and a movie in a 3-ary fact (\textit{playedCharacterIn, LeonardNimoy, Spock, StarTrek 1}).
%Given a fact, one of the crucial tasks in the N-ary relational data is to verify whether a fact is a plausible or not.
%Previous studies \cite{bordes2013translating,yang2014embedding,dettmers2018convolutional,kazemi2018simple,zhang2019autosf} focus on handling this task on binary relational data (a.k.a. knowledge graph~\cite{nickel2015review,wang2017knowledge}), 
%which is a special case of N-ary relational data that only contains binary facts.
%However, it is also essential to handle hyper-relational facts, 
%i.e., n-ary facts with $n>2$, 
%because they are very common in the knowledge base.
%It has been reported in \cite{wen2016representation} that more than 30\% of the entities in Freebase \cite{bollacker2008freebase} involves in the hyper-relational facts.
%Moreover, the facts with high-arity may provide benefits in the question answering scenario \cite{ernst2018highlife} since it usually contains more complete information compared with binary facts.

Many models have been proposed to tackle the link prediction task on the N-ary relational data.
The translational distance models m-TransH \cite{wen2016representation} and RAE \cite{zhang2018scalable} extend a well-known method TransH~\cite{wang2014knowledge} from binary to the n-ary scenario.
%{\color{blue}
But TransH cannot handle certain relations~\cite{kazemi2018simple,sun2019rotate}.
Thus, it is regarded as inexpressive
%\footnote{
%	\# shimin: revised
%	+qm+ this is where ``expressive'' first appear,
%	better directly let readers know what is it.
%	}
since a fully expressive model should be able to handle arbitrary relation patterns on the binary case~\cite{kazemi2018simple}.
Consequently, m-TransH and RAE are also not expressive.
%}
However, the expressive ability largely determines the performance of embedding models.
Thus, the expressiveness of translational distance models worsens their performance in the case of N-ary relational data.
Furthermore, the neural network models, NaLP~\cite{guan2019link}, HINGE~\cite{rosso2020beyond}, and NeuInfer~\cite{guan2020neuinfer}, achieve good performance by employing complex neural networks to learn embeddings.
But they all introduce an enormous amount of parameters, which contradicts the linear time and space requirement in knowledge bases \cite{bordes2013irreflexive}.

\begin{table*}[t]
	\centering
	\caption{Summary of existing n-ary works.
		Whether a scoring function is expressive depends on its capability of handling common relation patterns as in \cite{zhang2019autosf}.
		The Mixed-arity indicates whether a model jointly learn from the N-ary relational data with mixed arity.
		$N$ is the maximum arity of facts.
		$n_e$ and $n_r$ are the number of entities and relations, respectively. 
		$d_e$ and $d_r$ denote the dimensionality of embeddings on entity and relation, respectively.
		And $d_{\max}=\max_{i} d_i$ with $\prod_{i=1}^cd_i=d_e^nd_r$ in GETD \cite{liu2020generalizing}. 
		The time is the computational cost of calculating the score of the single n-ary fact based on $d=d_e=d_r$.}
	\label{table:summary}
	\setlength\tabcolsep{10pt}
	%\vspace{-5px}
	\begin{tabular}{ c | c | c | c | c | c }
		\toprule
		\multirow{2}{*}{\bf Type} & \multirow{2}{*}{\bf Models}           & \multicolumn{2}{c|}{\bf Effectiveness} &      \multicolumn{2}{c}{\bf Efficiency}       \\ \cmidrule{3-6}
		                          &                                       & Expressive & Mixed-arity               & Time         & Space                          \\ \midrule
		Translational Models      & m-TransH \cite{wen2016representation} & $\times$   & $\checkmark$                   & $O(d)$       & $O(n_ed_e+n_rd_r)$             \\ \cmidrule{2-6}
		                          & RAE \cite{zhang2018scalable}          & $\times$   & $\checkmark$                   & $O(d^2)$     & $O(n_ed_e+n_rd_r)$             \\ \midrule
		Neural Network Models     & NaLP \cite{guan2019link}              & unknown        & $\checkmark$                   & $O(d^2)$     & $O(n_ed_e+Nn_rd_r)$            \\ \cmidrule{2-6}
		                          & HINGE \cite{rosso2020beyond}          & unknown        & $\checkmark$                   & $O(d^2)$     & $O(n_ed_e+Nn_rd_r)$            \\ \cmidrule{2-6}
		                          & NeuInfer \cite{guan2020neuinfer}      & unknown        & $\checkmark$                   & $O(d^2)$     & $O(n_ed_e+Nn_rd)$              \\ \midrule
		Tensor Decomposition      & n-TuckER \cite{liu2020generalizing}   & $\checkmark$    & $\times$                  & $O(d^{n+1})$ & $O(n_ed_e+n_rd_r+d_e^nd_r)$    \\ \cmidrule{2-6}
		Models                    & GETD \cite{liu2020generalizing}       & $\checkmark$    & $\times$                  & $O(d^3)$    & $O(n_ed_e+n_rd_r+cd_{\max}^3)$ \\ \cmidrule{2-6}
		                          & S2S                                   & $\checkmark$    & $\checkmark$                   & $O(d)$      & $O(n_ed_e+n_rd_r)$           \\ \bottomrule
	\end{tabular}
\end{table*}

Tensor decomposition models \cite{balazevic2019tucker,liu2020generalizing} introduce a natural way to model N-ary relational data with a $(n+1)$-order tensor and become state-of-the-art because of their expressiveness.
TuckER \cite{balazevic2019tucker} proposes to model the binary relational data with a 3-order tensor and then decomposes it for embedding learning.
It is easy to extend TuckER from binary to high-arity relational data by modeling n-ary facts with a high-order tensor, 
named n-TuckER \cite{balazevic2019tucker,liu2020generalizing}.
However, such a simple extension will lead to the curse of dimensionality due to the large size of the core tensor.
Therefore, GETD \cite{liu2020generalizing}
simplifies the core tensor  
with Tensor Ring Decomposition~\cite{zhao2016tensor} to reduce the model complexity.
Then, 
GETD achieves outstanding performance in the N-ary relational data because of less model complexity and expressive guarantee  \cite{liu2020generalizing}.

However, existing tensor decomposition models for N-ary relational data still suffer from two issues: 
\textit{data sparsity} and \textit{over-parameterization}.
First, it is well-known that the N-ary relational data is very sparse, which is difficult for training and learning \cite{pujara2017sparsity}.
But existing tensor decomposition models \cite{balazevic2019tucker,liu2020generalizing} can only learn embeddings from facts with a specific arity $n$, while the N-ary relational data usually contains facts with different arities \cite{wen2016representation,rosso2020beyond}.
In other words, tensor decomposition models cannot leverage all known facts of the given N-ary relational data, which causes the data sparsity issue to become even more severe.
Second, current tensor decomposition models achieve the expressive capability by maintaining an over-parameterized core tensor, even GTED requires cubic model complexity.
Such over-parameterization for expressiveness not only makes the model inefficient but also difficult to train.
We summarize the above existing models for N-ary relational data in Table~\ref{table:summary}. 
%\footnote{\# shimin: the space complexity of S2S}
We first compare the two main factors that affect the effectiveness of current models, 
the expressive capability, and whether the model can learn from facts with mixed arity.
Then, 
to demonstrate whether the model requires a large number of parameters, we compare their efficiency from the infer time and size of parameter space.
Obviously, none of the existing works can cover all the aspects.

This paper aims to alleviate the data sparsity and over-parameterization issues of 
existing tensor decomposition models for n-ary relation data learning.
To handle the data sparsity issue, we propose to partially share embeddings across arities and jointly learn embeddings from the N-ary relational data with mixed arity.
Then, 
motivated by the structurally sparse patterns discovered from existing tensor models on binary relational data
and the success of neural architecture search (NAS)~\cite{automl_book,yao2018taking} on designing data-specific deep networks,
we search to sparsify the dense core tensors using NAS techniques 
to avoid over-parameterization.
In this way,
we address the issues of data sparsity and over-parameterization while retaining the expressiveness of tensor models.
%\footnote{
%\# shimin: Webconf R1: The notion of "expressivity" and the expressivity limitations of m-TransH should be defined and better explained.
%}

We summarize the important notations in Table~\ref{table:notation},
and our contributions are listed as follows:
\begin{itemize}[leftmargin=*]
	\item 
	We propose a new model, i.e., S2S, to learn from N-ary relational data,
	which simultaneously addresses the data-sparsity and over-parameterization issue faced by existing tensor decomposition models.
	
	%\item 
	%To alleviate the over-parameterization issue caused by dense core tensors of tensor decomposition models, we propose a novel way to sparsify the core tensor by leveraging the domain-specific knowledge on N-ary relational data. Besides, the sparse core tensor can maintain the expressive capability of dense core tensors.
	%To alleviate the data-sparsity issue, we enable the tensor decomposition model to jointly learn from the N-ary relational data with mixed arity by  sharing embeddings.
	\item To capture the data-specific knowledge,
	we propose a novel approach to search for multiple sparse core tensors, which are utilized to jointly learn from any given N-ary relational data with mixed arity.
	
	\item We test the proposed model on the link prediction task in both binary and N-ary relational data.
	Experimental results show that S2S not only achieves outstanding performance in embedding learning but also improves efficiency.
\end{itemize}

%Let $\cdot$ denote the vector dot product, $\langle \cdot \rangle$ be multi-linear inner product, i.e., $\langle \bm{r}_i,\bm{e}_j,\bm{e}_k \rangle =\sum_{p=1}^d[\bm{a}]_p\cdot[\bm{b}]_p\cdot[\bm{c}]_p$, and $\circ$ denotes multi-way outer product, i.e., $(\bm{R}\circ\bm{E}\circ\bm{E})_{ijk} = \langle \bm{r}_i,\bm{e}_j,\bm{e}_k \rangle$.

\begin{figure*}[t]
	\centering
	\subfigure[TuckER.]{\includegraphics[width=0.47\columnwidth]{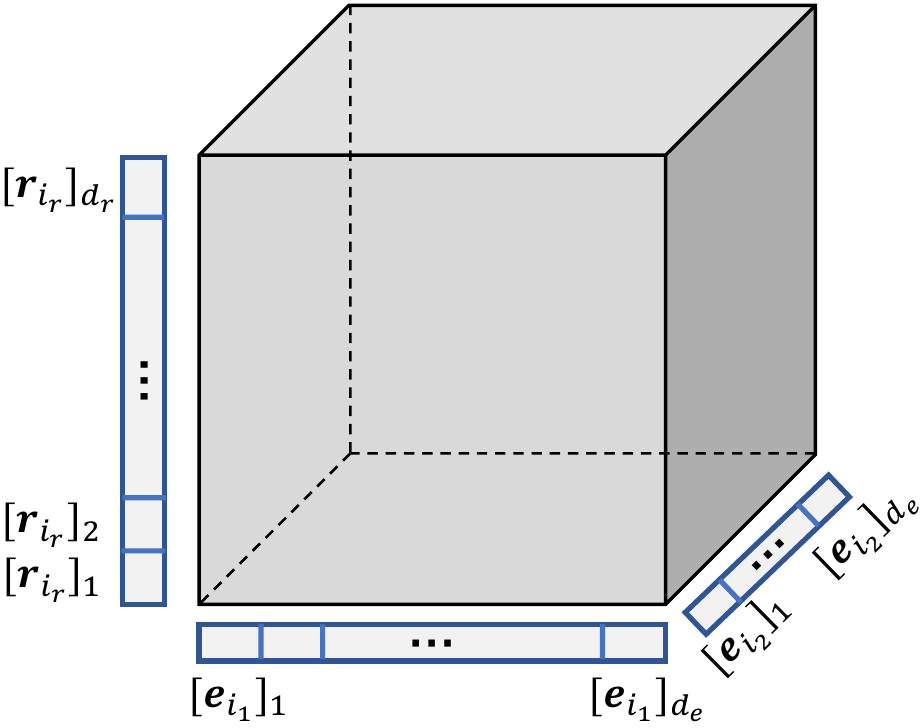}}
	\subfigure[DistMult.]{\includegraphics[width=0.47\columnwidth]{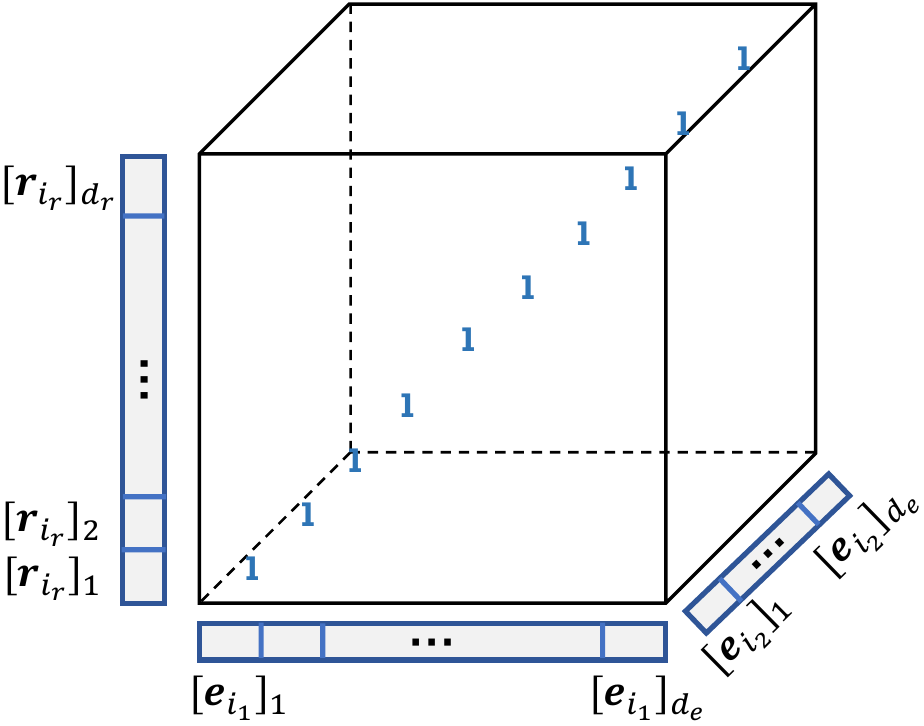}}
	\subfigure[ComplEx.]{\includegraphics[width=0.47\columnwidth]{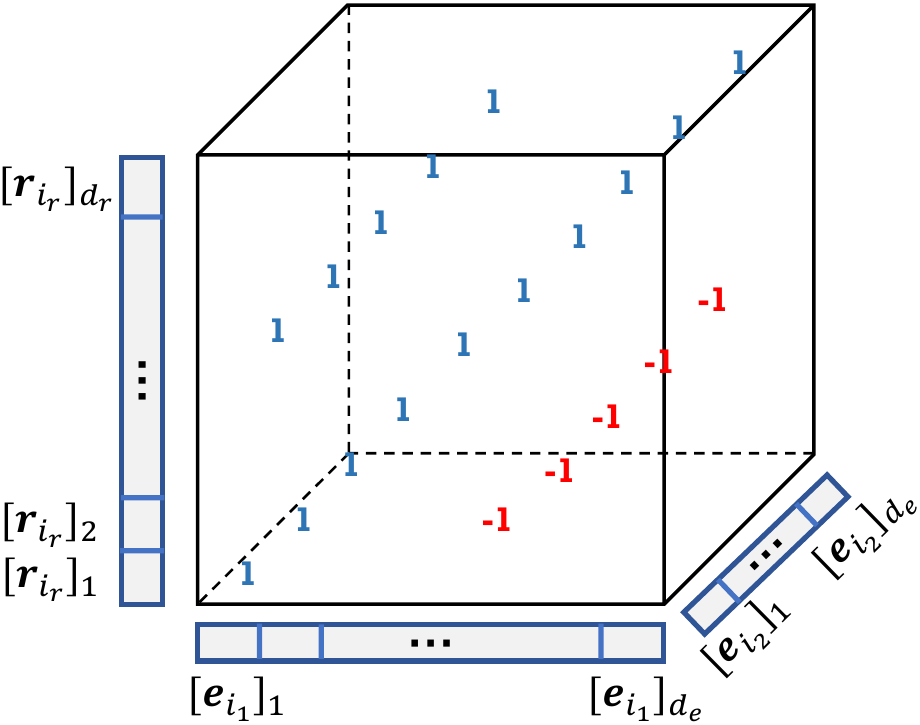}}
	%\subfigure[SimplE.]{\includegraphics[width=0.45\columnwidth]{Figs/SimplE.pdf}}
	\subfigure[SimplE.]{\includegraphics[width=0.47\columnwidth]{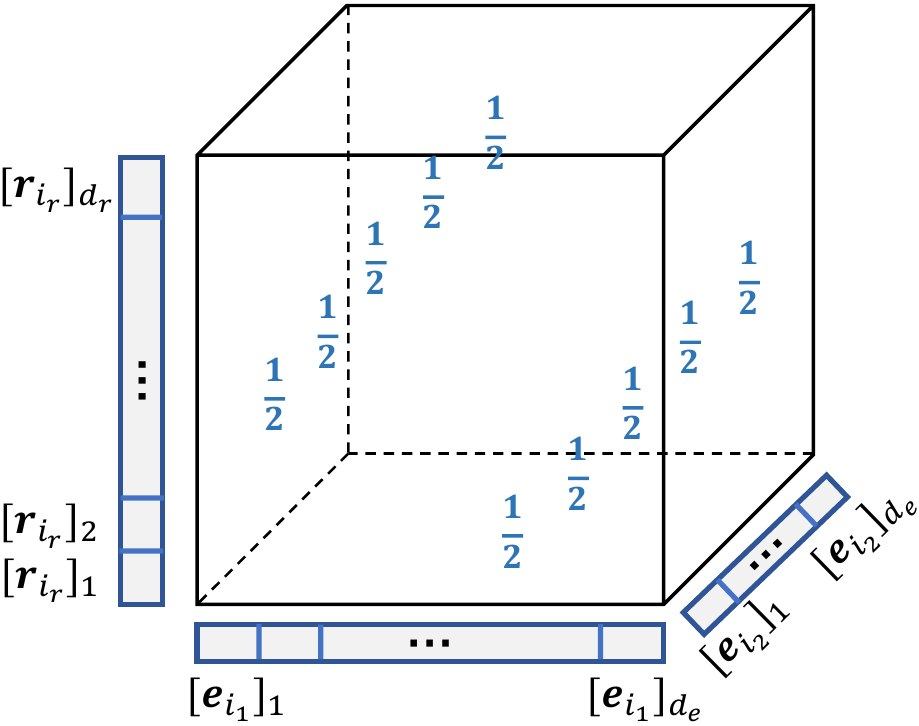}}
	%\vspace{-10px}
	\caption{
		(a) Each element in TuckER core tensor interprets the correlation between entities and relations of every embedding dimension; 
		(b), (c) and (d) illustrate DistMult, ComplEx and SimplE under representations of TuckER core tensor, respectively.
		Note that elements that are set to 0 are represented in white while gray elements are unknown.}
	\label{fig:coreTensor}
	%\vspace{-5px}
\end{figure*}

\section{Related Works}
Recently, many tensor decomposition approaches have been introduced to describe the N-ary relational data \cite{yang2014embedding,trouillon2017knowledge,kazemi2018simple,liu2017analogical,balazevic2019tucker,liu2020generalizing}.
Specifically,
given facts with a specific arity $n$, 
a $(n+1)$-order tensor $\mathcal{X}\in\{0,1\}^{n_r\times n_e\times\cdots\times n_e}$ is utilzed to represent a N-ary relational data, where $\mathcal{X}_{i_r,i_1,\dots,i_n}=1$ represents an existing fact $(r_{i_r},e_{i_1},\cdots,e_{i_n})$ otherwise $\mathcal{X}_{i_r,i_1,\dots,i_n}=0$.
For instance, binary relational data (i.e., $n=2$) is represented into 3-order tensor $\mathcal{X} \in \{0,1\}^{n_r\times n_e \times n_e}$.
Then, 
different tensor decomposition models differ in how the tensor $\mathcal{X}$ is decomposed into the entity embedding $\bm{E} \in \mathbb{R}^{n_e\times d}$, and relation embedding $\bm{R}\in \mathbb{R}^{n_r\times d}$.

Generally, 
there are two main tensor decomposition techniques that have been introduced to embed n-ary relational data,
i.e., CANDECOMP/PARAFAC (CP) decomposition \cite{hitchcock1927expression} and Tucker decomposition~\cite{tucker1966some}.
%\footnote{+qm+ better move these common operations (in blue) to a ``notation'' section.
%\# shimin: revised
%}
%\textcolor{blue}
%{Let
%$\cdot$ denote the vector dot product, 
%$\langle \cdot \rangle$ be multi-linear inner product, 
%i.e., $\langle \bm{r}_i,\bm{e}_j,\bm{e}_k \rangle =\sum_{p=1}^d[\bm{a}]_p\cdot[\bm{b}]_p\cdot[\bm{c}]_p$, 
%and $\circ$ denotes multi-way outer product,
%i.e., $(\bm{R}\circ\bm{E}\circ\bm{E})_{ijk} = \langle \bm{r}_i,\bm{e}_j,\bm{e}_k \rangle$.}
CP decomposes $\mathcal{X}$ as $\bm{R} \circ \bm{E} \circ\cdots\circ \bm{E}$,
and the scoring function measures the plausibility of
a n-ary fact $s=(r_{i_r},e_{i_1},\dots,e_{i_n})$ with embedding $\bm{H}=\{\bm{E},\bm{R}\}$ is
\begin{equation}
f(s, \bm{H}) 
= 
\left\langle 
\bm{r}_{i_r}, 
\bm{e}_{i_1},
\dots,
\bm{e}_{i_n}
\right\rangle.
\end{equation}
Tucker decomposition factorizes $\mathcal{X}$ as
$\mathcal G \times_1 \bm R\times_2 \bm E   \times_3 \cdots \times_{n+1} \bm E$, 
where $\mathcal G\in \mathbb{R}^{d_r\times d_e \times\cdots\times d_e}$.
%\footnote{+qm+ similar as above footnotes,
%	you also need to define the notation, how to compute in details.
%\# shimin: updated
%}
%\textcolor{blue}{is the core tensor and $\times_{n}$ is tensor product along the $n$-th mode~\cite{kolda2009tensor}.}
Then, 
the corresponding scoring function  is
\begin{align}
\label{eq:tuckerSF}
f(s,\bm{H}) & = \mathcal G \times_1 \bm{r}_{i_r} \times_2 \bm{e}_{i_1} \times_3 \cdots \times_{n+1} \bm{e}_{i_n}.
\end{align}
Unlike CP, 
Tucker's core tensor $\mathcal{G}$ encodes the correlation between entity and relation embeddings.
Thus, 
the core tensor enables different entities and relations to share the same set of knowledge of any given N-ary relational data \cite{balazevic2019tucker}.

\subsection{Binary Relational Data Learning}
\label{ssec:relatedBinary}

In the past decades, embedding approaches have been developed as a promising method to handle binary relational data, such as translational distance models \cite{bordes2013translating,wang2014knowledge}, neural network models \cite{dettmers2018convolutional,balavzevic2019hypernetwork}, 
and tensor decomposition models \cite{yang2014embedding,trouillon2017knowledge,kazemi2018simple,liu2017analogical,balazevic2019tucker}.

%\footnote{+qm+ kind of sudden jump to ``expressive guarantee''.
%\# shimin: revised
%}
As in Section~\ref{sec:intro}, the expressive capability is important for embedding models to achieve outstanding performance.
Among kinds of methods, tensor decomposition models demonstrate their superiority in terms of expressive guarantee \cite{kazemi2018simple,wang2018multi} and empirical performance \cite{lacroix2018canonical}.
More specifically, the literature \cite{yang2014embedding,trouillon2017knowledge,kazemi2018simple,liu2017analogical} have been shown to be different variants based on the CP decomposition \cite{lacroix2018canonical,zhang2019autosf}.
And TuckER \cite{balazevic2019tucker} first introduces Tucker decomposition \cite{tucker1966some,kolda2009tensor} into binary relational data learning.
Generally, the comprehensive core tensor design in TuckER can interpret CP-based tensor decomposition models (e.g., DistMult \cite{yang2014embedding}, ComplEx \cite{trouillon2017knowledge}, 
SimplE \cite{kazemi2018simple}) as sparse cases of various core tensors as illustrated in Figure~\ref{fig:coreTensor}. 
But please note that compared with TuckER, the CP-based tensor decomposition models \cite{yang2014embedding,trouillon2017knowledge,kazemi2018simple} show competitive performance in binary relational data without introducing the dense core tensor.
This motivates us to introduce the structured sparsity into high-order tensor decomposition models for N-ary relational data.

\begin{table}[h]
	\caption{A summary of common notations.}
	\label{table:notation}
	\setlength\tabcolsep{2pt}
	\small
	\centering
	%\vspace{-10px}
	\begin{tabular}{c|p{7cm}}
		\toprule
		\textbf{Symbol} & \textbf{Definition} \\ \midrule
		$s$  & The n-ary fact $s=(r_{i_r},e_{i_1},\dots,e_{i_n})$  \\ \midrule
		%$\bm{H}=\{\bm{E},\bm{R}\}$ 
		$\bm{E},\bm{R}$ & Embeddings $\bm{E} \in \mathbb{R}^{n_e\times d}, \bm{R}\in \mathbb{R}^{n_r\times d}$.
		\\ \midrule
		$f(s,\bm{H})$ & The scoring function of $s$ with $\bm{H}=\{\bm{E},\bm{R}\}$ \\ \midrule
		$M, N$ & The number of segments, and maximum arity in given data  \\ \midrule
		%$N$ & The maximum arity in the N-ary relational data\\ \midrule
		$\text{\tt OP}$ & Candidate diagonal tensor $\text{\tt OP} =\{-\mathcal{I}_{1}^n, \mathcal{I}_{0}^n, \mathcal{I}_{1}^n\}$ \\\midrule
		$\mathcal{Z}^n$ &The sparse core tensor for facts with arity $n$ \\ \midrule
		$\bm{\theta}$ & The core tensor weight\\ \midrule
		$\cdot$ & The vector dot product \\ \midrule
		$\langle \cdot \rangle$ & The multi-linear inner product, i.e., $\langle\bm{a},\bm{b},\bm{c} \rangle=\sum_{p=1}^d[\bm{a}]_p\cdot[\bm{b}]_p\cdot[\bm{c}]_p$\\ \midrule
		$\circ$ & The multi-way outer product, i.e., $(\bm{R}\circ\bm{E}\circ\bm{E})_{ijk} = \langle \bm{r}_i,\bm{e}_j,\bm{e}_k \rangle$\\ \midrule
		$\times_{k}$& The $k$-th mode product of $\mathcal{G}\in\mathbb{R}^{d_1\times\dots\times d_n}$ with $\bm{A}\in\mathbb{R}^{J\times d_k}$ , i.e., $(\mathcal{G}\times_{k}\bm{A})_{i_1,\dots,i_{k-1},j,i_{k+1},\dots,i_{n}} = \sum_{i_k=1}^{d_k}\mathcal{G}_{i_1,\dots,i_n}\bm{A}_{j,i_k}$. \\
		\bottomrule
	\end{tabular}
\end{table}

\subsection{N-ary Relational Data Learning}
\label{ssec:relatedNary}

As presented in Table~\ref{table:summary}, 
many models have been proposed to capture n-ary facts,
and tensor decomposition models are state-of-the-arts among them.
Specifically,
the core tensor of n-TuckER in \eqref{eq:tuckerSF} increases exponentially w.r.t the arity $n$.
%\footnote{+qm+ should be $N$ instead?
%\# shimin: n-TuckER can only handle one specific arity, thus i think there is no maximum $N$
%}
To address such an over-parameterization problem,
GETD \cite{liu2020generalizing} simplifies $\mathcal{G}$ with the help of Tensor Ring Decomposition \cite{zhao2016tensor}, which can approximate the high-order tensor $\mathcal{G}$ by a set of 3-order latent tensors $\{\mathcal{W}_i\}$.
GETD first reshapes $\mathcal{G}$ into $c$-order tensor $\hat{\mathcal{G}}\in R^{d_1\times\cdots\times d_c}$ with $\prod_{i=1}^cd_i=d_e^nd_r$,
then decomposes $\hat{\mathcal{G}}$ into $c$ latent 3-order tensors 
$\{\mathcal{W}_i|\mathcal{W}_i\in R^{n_i\times d_i \times n_{i+1}}\}_{i=1}^c$, where $n_1 = \cdots = n_{c+1}$.
As a result, 
\eqref{eq:tuckerSF} is reformulated as
\begin{equation}\label{eq:GETD}
\mathcal{X} \approx \text{TR}\left(\mathcal{W}_1,\cdots,\mathcal{W}_c\right) \times_1 \bm R^{\top} \times_2 \bm E^{\top}   \times_3 \cdots \times_{n+1} \bm E^{\top},
\end{equation}
where $\text{TR}\left( \cdot \right)$ denotes the Tensor Ring computation \cite{zhao2016tensor,liu2020generalizing}.
The core tensor in GETD is subsequently reduced to $O( d_{\max}^3 )$, where $d_{\max}=\max_{i} d_i$.
However, it still requires cubic complexity, which is hard to train.
And note that $\mathcal{X}$ can only represent facts with a specific arity $n$.
Thus, existing tensor decomposition models suffer from the data sparsity issue since they cannot leverage all facts in n-ary relational data. 
%\footnote{+qm+ one more sentence describing what's the remained problem.
%\# shimin: revised
%}

\begin{figure*}[t]
\centering
\subfigure[Modeling sparse core tensor $\mathcal{Z}_2$.]{\includegraphics[width=0.7\columnwidth]{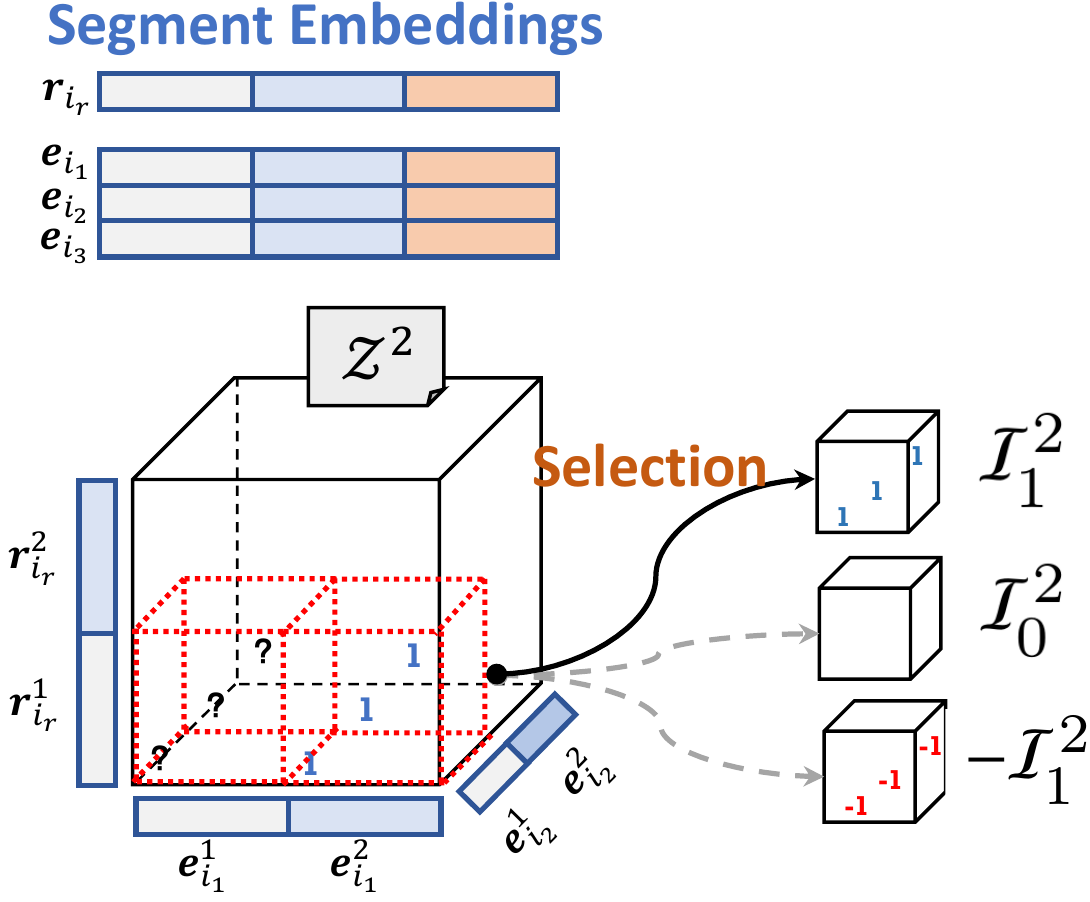}}
\hspace{5mm}
\subfigure[Modeling sparse core tensor $\mathcal{Z}_3$ .]{\includegraphics[width=1.2\columnwidth]{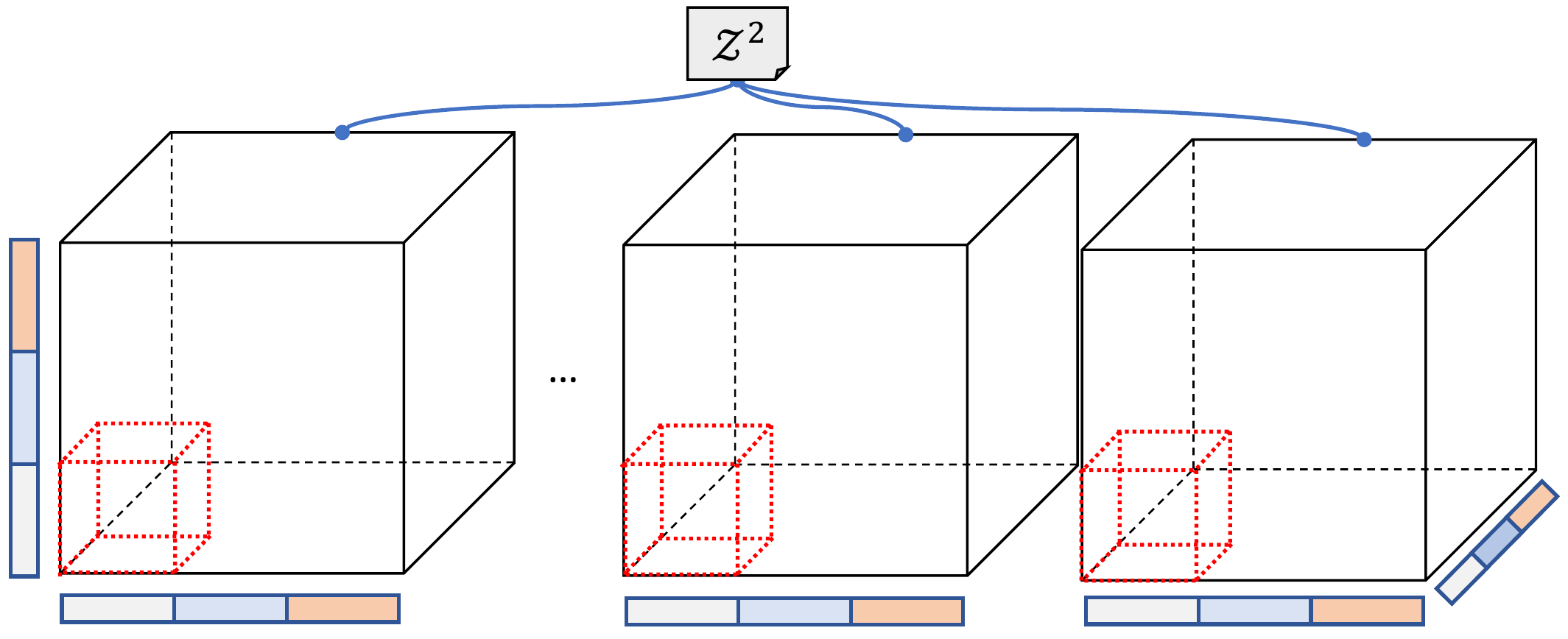}}
%\vspace{-12px}
\caption{Illustration to sparsify core tensor. Set the number of segments $M=3$. (a) The embedding is segmented into $M$ parts. Then, for the binary fact, we only utilize first $2$-th embedding segments for computation and sparsify the core tensor with $\mathcal{Z}^2$, of which component is selected from $\{\mathcal{I}^2_1, \mathcal{I}^2_0, -\mathcal{I}^2_{1}\}$.
	(b) For $3$-ary fact, we employ all $3$ embedding segments for computation.
	Note that the calculation performed in the red cube is 
	$\mathcal{I}_0^3\times_1\bm{r}_{i_r}^{1}\times_2\bm{e}_{i_1}^{1}\times_3\bm{e}_{i_2}^{1}\times_4\bm{e}_{i_3}^{1}$.}
\label{fig:sparseCore}
\end{figure*}

\section{Reformulate Tensor Models}

Unfortunately, existing tensor decomposition models for the N-ary relational data still suffer from data-sparsity and over-parameterization (Section~\ref{sec:intro}).
First, $\mathcal{X}$ can only represent facts with a specific arity $n$ (Section~\ref{ssec:relatedNary}),
which limits existing models to only learn from facts with 
%\footnote{+qm+ single arity v.s. fixed arity, keep consistent. \# shimin: checked}
the fixed arity.
This makes the data-sparsity problem even more serious, 
as these models cannot fully leverage existing facts.
Besides, 
tensor decomposition models at least require a huge amount of parameters 
to be the expressive~\cite{liu2020generalizing}.
This makes them difficult to train and easy to overfit since there may not be enough training facts to activate the expressive power.
In the sequel, 
we propose a new tensor model based on sharing embedding (Section~\ref{ssec:embedShare})
and sparse core tensors (Section~\ref{ssec:sparseCore}) to address above issues.

\subsection{Share Embedding}
\label{ssec:embedShare}

%\footnote{+qm+ to show the mixed case can help data sparsity problem,
%	you can show 3/4-arity performance in mixed case,
%	i.e., Table~\ref{table:linkPredictionMix}.
%\# shimin: presented in Table 3 and 4
%}
As discussed in Section~\ref{ssec:relatedNary}, 
tensor decomposition models can only learn from 
%\footnote{+qm+ this part is not discussed in Section~\ref{ssec:relatedNary}.
%\# shimin: revised to related work
%}
the part of facts, 
i.e., facts with a specific arity $n$ in N-ary relational data, 
which causes more severe data sparsity issue.
Although they can be forced to jointly learn from facts with 
%\footnote{+qm+ ``mixed arity'' v.s. ``mixed arity'' v.s. ``mixed-arity'', check consistency
%\# shimin: revised
%}
mixed arity by share the embedding across various arities \cite{wen2016representation,zhang2018scalable,guan2019link},
%\footnote{+qm+ ref? any paper does this? \# shimin: added}, 
such embedding sharing 
%\footnote{+qm+ note that I removed ``noise'' here \# shimin: removed all noise claims}
scheme can be too restrictive and lead to poor performance.
Thus, to alleviate the data-sparsity issue, we propose to segment embeddings and share different embedding parts across arities for the		 N-ary relational data learning.

First, 
%\footnote{+qm+ relation between $M$ and $N$.
%\# shimin: revised
%}
given the maximum arity $N$ and number of segments $M$ (usually $M\leq N\ll d$), we segment embeddings of relations and entities into 
$M$ splits, 
i.e., $\bm{e}_i = [\bm{e}^{1}_i; \dots; \bm{e}^{M}_i ]$ where $\bm{e}^{j}_i \in \mathbb{R}^{\nicefrac{d}{M}}$, and same for relation $\bm{r}_{i_r}$.
Then, given the arity $n$ and $m=\min\{n,M\}$, we utilize first $m$-th segments of embeddings to compute the score.
%\footnote{+qm+ no need to have () in super-script.
%\# shimin: checked and revised
%}
For example, given an entity vector $\bm{e}_i = [\bm{e}^{1}_i; \dots; \bm{e}^{3}_i]$, we use $[\bm{e}^{1}_i;\bm{e}^{2}_i]$ if it involves in a binary fact and use $[\bm{e}^{1}_i;\bm{e}^{2}_i;\bm{e}^{3}_i]$ for facts with arity 3 or even higher.
Then, 
to handle n-ary facts, we build a core tensor $\mathcal{Z}^n$ for every arity $n$, 
	where $\mathcal{Z}^n$ is a $(n+1)$-order tensor with size $\nicefrac{md}{M}$ (e.g., $\mathcal{Z}^2 \in \mathbb{R}^{\nicefrac{2d}{M}\times\nicefrac{2d}{M}\times\nicefrac{2d}{M}} $).
Overall, the proposed approach can handle the N-ary relational data with mixed arity by learning multiple core tensors $\{\mathcal{Z}^n\}_{n=2}^N$.
Such embedding sharing with segments can make embeddings learn from the low-order information in the high-order fact training, 
but also retain a part of the high-order specific information.

Unfortunately, 
each $\mathcal{Z}^n$ requires $O((\nicefrac{md}{M})^{n+1})$ and may still lead to over-parameterization.
Next, 
we introduce sparse core tensors that require much less complexity but maintains expressiveness.

\subsection{Sparsify Core Tensor}
\label{ssec:sparseCore}

Existing tensor decomposition models require a large number of parameters to maintain the expressiveness for the N-ary relational data, which makes the model inefficient and difficult to train.
Thus, the question comes that \textit{is it essential to learn a dense core tensor with so many trainable parameters for strong expressiveness}? 
To answer this question, we first review the domain-specific knowledge on binary relational data.

\subsubsection{Motivation from Binary Relational Data}\label{sec:moti}
%In binary case,
TuckER introduces the dense core tensor $\mathcal{G}\in \mathbb{R}^{d_r\times d_e\times d_e}$ to achieve outstanding performance in binary relational data.
In \eqref{eq:tuckerSF},
each entry $\mathcal{G}_{k_r,k_1,k_2}$ in $\mathcal{G}$ actually interprets the correlation among embeddings at the dimension level, i.e., the $k_r$-th dimension of $\bm{r}$, $k_1$-th dimension of $\bm{e}_1$, and $k_2$-th dimension of $\bm{e}_2$.
However, 
such a redundant core tensor is hard to train and easy to overfit.
%\footnote{+qm+ directly pull back to ``data sparsity'' and ``hard to train''.
%\# shimin: revised
%}
%it is quite redundant for TuckER's core tensor to evaluate the correlation dimension by dimension
%since high dimensional embedding will dilute a large part of information.
%The high

As mentioned in Section~\ref{ssec:relatedBinary}, 
other simple tensor-based models, such as %DistMult \cite{yang2014embedding}, 
ComplEx \cite{trouillon2017knowledge}, and SimplE \cite{kazemi2018simple}, can be regarded to have sparse core tensors with special patterns (see Figure~\ref{fig:coreTensor}).
But these simple models are expressive and achieve relatively good performance without introducing dense core tensor.
Consequently, it may be unnecessary to learn a smaller complex core tensor with an enormous amount of parameters in N-ary relational data.
This motivates us to sparsify the core tensor 
$\{\mathcal{Z}^n\}_{n=2}^N$ in the n-ary case by only interpreting the correlation among embedding segments.

\subsubsection{Structured Sparsity in Core Tensors}
\label{sssec:sparseCore}

%A tensor $\mathcal{G}$ is \textit{diagonal} when $\mathcal{G}_{i,j,\dots,k}\neq 0$ holds if and only if $i=j=\dots=k$.
%We use $\mathcal{I}_{v}^n$ to denote a $(n+1)$-order tensor, which is diagonal with $v$ on the super-diagonal and zeros elsewhere.

We first divide the core tensor $\mathcal{Z}^n$ into 
%\footnote{+qm+ you already have $K = n^{n+1}$ here,
%	you do not need to repeat $n^{n+1}$ in the sequel for so many times.
%\# shimin: revised
%}
%\footnote{+qm+ PS. do $C$ (in table 1) and $K$ mean the same thing?
%\# shimin: removed
%}
%\footnote{+qm+ 	If $c$ and $C$ are constants do not depends, you can ignore them in Table 1,
%	it is big O notation there.
%\# shimin: removed
%}
$K = m^{n+1}$ tensors, 
denoted as $\mathcal{Z}^n = \{\mathcal{Z}_k^n\}_{k=1}^K$, where $\mathcal{Z}_k^n$ is a $(n+1)$-order tensor with size $\nicefrac{d}{M}$. 
After delving deep into tensor models on binary relational data (Figure~\ref{fig:coreTensor}), we observe that simple values (i.e., -1, 0 and 1) on the diagonal form of the core tensor are expressive for capturing interactions.
We first define such simple interaction in the high-order scenario.
A tensor $\mathcal{I}$ is \textit{diagonal} when $\mathcal{I}_{i,j,\dots,k}\neq 0$ holds if and only if $i=j=\dots=k$.
We use $\mathcal{I}_{v}^n$ to denote a $(n+1)$-order tensor with size $\nicefrac{d}{M}$, which is diagonal with $v$ on the super-diagonal and zeros elsewhere.
Then, we propose to select the appropriate diagonal tensor from $\{-\mathcal{I}_{1}^n, \mathcal{I}_{0}^n, \mathcal{I}_{1}^n \}$ to replace $\mathcal{Z}^n_k \in \mathcal{Z}^n$ as Figure~\ref{fig:sparseCore} (a).
Then, the diagonal tensor $\mathcal{I}_v^n$ encodes the correlation among embedding segments $(\bm{r}_{i_r}^{j_r}, \bm{e}_{i_1}^{j_1}, \dots, \bm{e}_{i_n}^{j_n})$, where $-\mathcal{I}_{1}^n$ represents the negative correlation, $\mathcal{I}_{0}^n$ is no correlation, and $\mathcal{I}_{1}^n$ denotes the positive correlation.
Note that any positive or negative value $v$ can be used for $\mathcal I_v^n$ here.
We utilize $1$ and $0$ for simplicity.
Formally, we formulate the definition of sparse core tensor as:

\begin{Definition}[Sparse Core Tensor]
\label{def:searchspace}
Given the embedding dimension $d$, 
the maximum arity $N$ and a specific arity $n$, let $\mathcal{I}^n_v$ denote the $(n+1)$-order diagonal tensor with size $\nicefrac{d}{M}$, and 
$\text{\tt OP}=\{-\mathcal{I}_1^n, \mathcal{I}_0^n, \mathcal{I}_1^n\}$ denote the operation set of candidate diagonal tensors.
Then, 
we propose to select every $\mathcal{Z}_k^n \in \mathcal{Z}^n$ from $\text{\tt OP}$.
Overall, the sparse core tensor is denoted to 
$\mathcal{Z}^n = \{\mathcal{Z}_k^n\}_{k=1}^K$, 
which interprets facts with the arity $n$.
\end{Definition}

Accordingly, given any fact $s$ with arity $n_s$, 
the scoring function based on $\mathcal{Z}^{n_s}$ is formulated as:
\begin{equation}
\label{eq:mix}
\!\!\!
f_z(s, \bm{H}; \mathcal{Z}^{n_s})
=\!\!\!\!\!\!\!\!\sum_{j_r,j_1,\dots,j_{n}}
\!\!\!\!\!\!\!\! \mathcal{Z}_k^{n_s} 
\!\times_1 \bm{r}_{i_r}^{j_r}
\!\times_2 \bm e_{i_1}^{j_1} 
\!\times_3 \cdots \times_{n_s+1} \bm{e}_{i_{n_s}}^{j_{n_s}},
\end{equation}
where any $j\in\{1,\dots,m\}$ and 
$k\in\{1,\dots,m^{n+1}\}$ corresponds to $(j_r, j_1,\dots,j_{n})$.
Compared with GETD's core tensor $O(cd^3_{\max})$, one sparse core tensor $\mathcal{Z}^n$ has a complexity of $O\left(m^{n+1}\right)$.
But note that $m, n \ll d_e$ or $d_r$,
and the arity $n$ over 4 are really rare in the common knowledge bases \cite{liu2020generalizing}.
Thus, we generally set the number of segments $M=4$ for the N-ary relational data in practical, which leads to a constant complexity such as $4^5=1,024$.
It is far smaller than the complexity of core tensor in GETD \cite{liu2020generalizing} in the real case (e.g., $4\cdot 50^3=500,000$).
And we theoretically demonstrate the expressiveness of S2S sparse core tensor design as in Theorem~\ref{theorem1}. The proof is presented in Appendix~\ref{appendix:theory}.

\begin{theorem}
\label{theorem1}
Given any N-ary relational data $S$ on the sets of entity $E$ and relation $R$, there exists a set of sparse core tensors $\{\mathcal{Z}^n\}_{n=2}^N$ with embeddings $\bm{E}$ and $\bm{R}$ that is able to accurately represent that ground truth.
\end{theorem}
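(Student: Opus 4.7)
The plan is to prove the expressiveness of S2S arity by arity, exploiting the fact that the core tensors $\{\mathcal{Z}^n\}_{n=2}^N$ are chosen independently across arities in \eqref{eq:mix}. For each fixed arity $n$ I would construct a sparse core tensor $\mathcal{Z}^n$ together with entries of the shared embeddings $\bm{E}$ and $\bm{R}$ so that $f_z(s,\bm{H};\mathcal{Z}^n)=1$ exactly when an arity-$n$ fact $s$ is true and $0$ otherwise. The basic reduction is that when the only nonzero block of $\mathcal{Z}^n$ equals $\mathcal{I}_1^n$ at some index $k^{*}=(j_r,j_1,\dots,j_n)$, the scoring function collapses to a single multi-linear inner product $\langle\bm{r}_{i_r}^{j_r},\bm{e}_{i_1}^{j_1},\dots,\bm{e}_{i_n}^{j_n}\rangle$ on a chosen tuple of segments, i.e., an $n$-ary CP score; by activating several more blocks with signs drawn from $\{-1,+1\}$ one recovers the kinds of asymmetric CP structures (DistMult, ComplEx, SimplE) depicted in Figure~\ref{fig:coreTensor}, which are known to be fully expressive once the embedding dimension is sufficiently large.

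Concretely, I would partition each segment of dimension $d/M$ into disjoint coordinate ranges $\mathcal{D}_2,\dots,\mathcal{D}_N$ with $|\mathcal{D}_n|\ge T_n$, where $T_n$ is the number of true arity-$n$ facts in the dataset. For each true arity-$n$ fact $(r_{i_r},e_{i_1},\dots,e_{i_n})$ I would allocate a dedicated coordinate $t\in\mathcal{D}_n$ and set the $t$-th entry of the relevant segments of $\bm{r}_{i_r},\bm{e}_{i_1},\dots,\bm{e}_{i_n}$ to $1$, matching the segment pattern that the nonzero blocks in $\mathcal{Z}^n$ prescribe; all other coordinates of the embeddings, and all blocks outside the chosen pattern, are zeroed out. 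A direct calculation then gives $f_z=1$ on true facts and $f_z=0$ on false ones, so the per-arity construction is complete; stitching together these constructions for $n=2,\dots,N$ proves the theorem.

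The hard part is that $\bm{E}$ and $\bm{R}$ are shared across arities, so the arity-$n$ construction may spuriously add to the arity-$n'$ score for $n'\neq n$ whenever an entity or relation participates in facts of several arities. Combining the disjointness of the coordinate ranges $\mathcal{D}_n$ with nonzero blocks in $\mathcal{Z}^n$ whose active coordinates lie inside $\mathcal{D}_n$ rules this out by design. A secondary and more subtle obstacle is that when $M<n+1$ the $n+1$ segment indices in any single block must repeat across positions, making that block position-symmetric; breaking the symmetry from the restricted operation set $\text{\tt OP}=\{-\mathcal{I}_1^n,\mathcal{I}_0^n,\mathcal{I}_1^n\}$ requires activating several blocks whose segment tuples are cyclic or reflective shifts of one another (the $n$-ary analogue of SimplE's two-block pattern), and verifying that this multi-block pattern inherits the asymmetric-CP expressive power required by Definition~\ref{def:searchspace} is the technical heart of the argument.
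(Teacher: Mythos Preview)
Your high-level strategy coincides with the paper's: pick $\mathcal{Z}^n$ so that $f_z$ collapses to an n-ary CP score, then invoke the one-hot expressiveness argument for CP. The paper packages this into two very short lemmas. Lemma~1 asserts that n-CP with $|S|$-dimensional embeddings represents any ground truth by dedicating one coordinate per true fact (set the $k$-th coordinate of every embedding involved in the $k$-th fact to $1$, and the $k$-th coordinate of every other embedding to $0$). Lemma~2 observes that choosing $\mathcal{Z}^n_k=\mathcal{I}_1^n$ exactly on the super-diagonal blocks $j_r=j_1=\cdots=j_n$ and $\mathcal{I}_0^n$ elsewhere makes $f_z$ equal the full n-CP score $\sum_{j=1}^m\langle\bm{r}^{(j)},\bm{e}_1^{(j)},\dots,\bm{e}_n^{(j)}\rangle$. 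The theorem is then declared immediate.

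Where you differ is in the bookkeeping. You worry about cross-arity interference from the shared $\bm{E},\bm{R}$ and propose disjoint coordinate ranges $\mathcal{D}_n$ to kill it; you also worry about the positional symmetry of a single block when $M<n+1$ and plan to break it with SimplE-style multi-block patterns. The paper's appendix addresses neither point: it runs the one-hot construction as if each arity were isolated, and its n-CP (a single shared $\bm{E}$ across all entity slots, as defined in Section~2) is itself symmetric in the entity positions, so Lemma~1 as written cannot separate $(r,e_1,e_2)$ from $(r,e_2,e_1)$. Your extra machinery is therefore buying genuine rigor that the paper's two-lemma shortcut glosses over; if your goal is only to reproduce the paper's argument, you can drop both complications, take the super-diagonal pattern for every $\mathcal{Z}^n$, and appeal directly to n-CP.
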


In summary, we have enabled tensor decomposition models to learn from mixed arity
and maintained the expressiveness of core tensors with less model complexity.
However, it is still a non-trivial problem to design proper sparse core tensors $\{\mathcal{Z}^n\}_{n=2}^N$ due to a large number of candidates.
Recall that $\mathcal{Z}^n_k \in \mathcal{Z}^n$ can be arbitrarily and independently chosen from \texttt{OP} in Definition~\ref{def:searchspace}.
Assume that $M=4$, there are totally $3^{81}$ candidates for $\mathcal{Z}^3$.
In the next, we will introduce how to find proper sparse core tensors  
by leveraging the Neural Architecture Search (NAS) method.

\section{Search Algorithm}

In general,
the scoring function design should be a data-specific problem.
Since the N-ary relational data also owns specific prior-knowledge,
it is crucial to search for a set of proper sparse core tensors 
that can lead to outstanding performance on various N-ary relational data.

\subsection{Problem Formulation}

Continuous formulation~\cite{liu2018darts,yao2019differentiable} and stochastic formulation~\cite{xie2018snas,akimoto2019adaptive}
are two popular formulations in NAS literature,
they both model choices from a given operation set as 
a differentiable optimization problem.
The difference is that
continuous relaxation directly couples all candidate operations together,
while stochastic relaxation samples each candidate based on a learned distribution.

Considering that $-\mathcal{I}^n_{1}$ and $\mathcal{I}^n_{1}$ should not be coupled together since they 
are exactly the opposite, 
we follow stochastic relaxation and
sample $\mathcal{Z}_k^n$ independently and stochastically from \texttt{OP}.
For a $\mathcal{Z}^n = \{\mathcal{Z}^n_k\}_{k=1}^K$, let $\theta_{pk}^n$ denote the probability of $o_p\in \text{\tt OP}$ 
to be sampled for $\mathcal{Z}^n_k$, where $\sum_p \theta_{pk}^n = 1$.
Then, 
we utilize $\bm{\theta}^n = [\theta_{pk}^n]_{3\times K}$ maintain the probability weight for $\{\mathcal{Z}^n_k\}_{k=1}^K$, thus $\bm{\theta} = \{\bm{\theta}^n\}_{n=2}^N$ for all sparse core tensor $\{\mathcal{Z}^n\}_{n=2}^N$.
Moreover, we utilize $\mathcal{Z} = \{\mathcal{Z}^n\}_{n=2}^N$ to represent the sampled sparse core tensor from 
the categorical distribution $p_{\bm{\theta}}(\mathcal{Z})$.
Follow \cite{automl_book,yao2018taking,elsken2018neural}, 
we formulate the searching to sparsify core tensor problem as a bi-level optimization problem
in Definition~\ref{def:problem}.

\begin{Definition}[Search Problem]
\label{def:problem}
Given the training and validation facts $S_{\text{tra}}$ and $S_{\text{val}}$,
the sparse core tensor search problem is defined as follows:
\begin{align}
\label{eq:problem}
\!\!\!\!
\bar{\bm{\theta}}
=&
\arg \max\nolimits_{\bm{\theta}}
\mathbb{E}_{p_{\bm{\theta}}(\mathcal{Z})}
\big[
M
(
\bar{\bm{H}}, \mathcal{Z}; S_{\text{val}}
)
\big],
\\
\text{ s.t. }
\bar{\bm{H}}
\label{eq:problem2}
=&
\arg \min\nolimits_{\bm{H}}
\mathbb{E}_{p_{\bm{\theta}}(\mathcal{Z})}
\big[
L
\left(
\bm{H}, \mathcal{Z}; S_{\text{tra}}
\right)
\big].
\end{align}
\end{Definition}

Note that $L$ (resp. $M$) measures the loss 
(resp. mean reciprocal ranking~\cite{bordes2013translating,wang2014knowledge}) 
on the training (resp. validation) data.
The bi-level formulation in Definition~\ref{def:problem} is hard to optimize since 
both the embedding $\bm{H}$ and the sparse core tensor weight $\bm{\theta}$ 
are hierarchically coupled.
In the sequel,
we propose an efficient algorithm for optimization,
which is motivated by recent NAS algorithms~\cite{xie2018snas,akimoto2019adaptive}.

\subsection{Searching to Sparsify Core Tensor}

Finally, we summarize the algorithm of searching to sparsify core tensor in Algorithm~\ref{alg:S2S}, where embedding $\bm{H}$ and core tensor weight $\bm{\theta}$ are alternatively updated.
Alternating steepest ascent~\cite{liu2018darts,xie2018snas,akimoto2019adaptive,yao2019differentiable} 
is a way to avoid computationally heavy optimization \eqref{eq:problem} and \eqref{eq:problem2}.
For any sampled sparse core tensor $\mathcal{Z}$, 
we first optimize the embedding $\bm{H}$ on $\mathcal{Z}$ with a mini-batch data in steps~3-4.
Then, we evaluate the performance of sampled $\mathcal{Z}$ on the updated $\bm{H}$, which leads to a fast evaluation mechanism.
Thus, we are able to update the core tensor weight $\bm{\theta}$ every iteration in step~5-6.
After searching, we derive the most likely sparse core tensor $\{\bar{\mathcal{Z}}^n\}_{n=2}^N$ with the fine-tuned $\bar{\bm{\theta}}$ in step~8.
Finally, we learn the embedding $\bm{H}$ by training $\{\bar{\mathcal{Z}}^n\}_{n=2}^N$ from scratch in step~9.

\begin{algorithm}[ht]
	\caption{S2S: Searching to Sparsify Tensor Decomposition for N-ary relational data}
	\label{alg:S2S}
	\begin{algorithmic}[1]
		\STATE Initialize the embedding $\bm{H}$, 
		probability distribution $p_{\bm{\theta}}(\mathcal{Z})$.
		\WHILE{not converged}
		\STATE Randomly sample a mini-batch $B_{\text{tra}}$ from $S_{\text{tra}}$ and sparse core tensor set $\mathcal{Z}$ from $p_{\bm{\theta}}(\mathcal{Z})$;
		\STATE Update embeddings $\bm{H}$ with $\nabla_{\bm{H}} \mathbb{E}_{p_{\bm{\theta}}(\mathcal{Z})}\left[L\right]$
		in \eqref{eq:embedGradient};
		\STATE Randomly sample a mini-batch $B_{val}$ from $S_{\text{val}}$;
		\STATE Update the weight $\bm{\theta}$ with $\nabla_{\bm{\theta}} \mathbb{E}_{p_{\bm{\theta}}(\mathcal{Z})} 
		\left[
		M
		\right]$ in \eqref{eq:coreGradient};
		\ENDWHILE
		\STATE Derive final $\{\bar{\mathcal{Z}}^n\}_{n=2}^N$ from the fine tuned $\bar{\bm{\theta}}$, such as $\bar{\mathcal{Z}}^n_k = o_p$ where $p=\arg\max_{p} \theta^n_{pk}$;
		\STATE Achieve the final embedding $\bar{\bm{H}}$ by training embeddings with $\{\bar{\mathcal{Z}}^n\}_{n=2}^N$ from scratch to convergence. 
	\end{algorithmic}
\end{algorithm}

Given the distribution $p_{\bm{\theta}}(\mathcal{Z})$,
we propose to solve \eqref{eq:problem2} by minimizing the expected loss $L$ on the training data $S_{\text{tra}}$.
Then, 
stochastic gradient descent can be performed to optimize the embedding $\bm{H}$.
Based on Monte-Carlo (MC) sampling \cite{hastings1970monte}, 
we sample $\lambda$ core tensor sets to approximate the gradient $\nabla_{\bm{H}}$  as
%We approximate the gradient $\nabla_{\bm{H}}$ as:
%we estimate it by Monte-Carlo (MC) using ∇xJ(xt,ci) with independent and identically distributed
%\footnote{+qm+ (7) and (8) are not correct,
%	there should be an average operation.
%	You need to sample some Z.
%\# shimin: revised
%}
\begin{align}
\label{eq:embedGradient}
\nabla_{\bm{H}} \mathbb{E}_{p_{\bm{\theta}}(\mathcal{Z})} 
\left[
L
\right]
\approx 
\frac{1}{\lambda}
\sum\nolimits_{i=1}^{\lambda}
\nabla_{\bm{H}}
L(\bm{H}, \mathcal{Z}^{(i)}; S_{\text{tra}})
,
\end{align}
where $\mathcal{Z}^{(i)}$ is a core tensor set that independent and identically distributed (i.i.d.) sampled from $p_{\bm{\theta}}(\mathcal{Z})$,
and $L(\bm{H}, \mathcal{Z}^{(i)}; S_{\text{tra}})$ is computed as:
\begin{equation}
L(\bm{H}, \mathcal{Z}^{(i)}; S_{\text{tra}}) 
= \sum\nolimits_{s\in S_{\text{tra}}} \ell 
\big(
s,f_z (\bm{H}; \mathcal{Z}^{n_s} ) 
\big),
\end{equation}
where $\ell(\cdot)$ is the extension of multi-class log-loss \cite{lacroix2018canonical} in the n-ary case \cite{liu2020generalizing} for a single fact $s$.
Similarly, the gradient w.r.t $\bm{\theta}$ can be approximated by MC sampling as:
\begin{equation}
\label{eq:coreGradient}
\!\!\!
\nabla_{\bm{\theta}} \mathbb{E}_{p_{\bm{\theta}}(\mathcal{Z})} 
\left[
M
\right]
\approx
\frac{1}{\lambda}
\sum\nolimits_{i=1}^{\lambda}
\nabla_{\bm{\theta}} M(\bm{H},\mathcal{Z}^{(i)};S_{\text{val}}).
\end{equation}
Then,
we propose to leverage
ASNG~\cite{akimoto2019adaptive},
which is the state-of-the-art stochastic optimization technique in NAS 
for optimizing $\bm{\theta}$:
\begin{equation*}
\nabla_{\bm{\theta}} M(\bm{H},\mathcal{Z}^{(i)};S_{\text{val}})
\!
=
\!
\sum_{s\in S_{\text{val}}}
\!\!
m
\big(s,f_z\left(\bm{H};\mathcal{Z}^{n_s}\right)\big)
\!
\left(
T\left(
\mathcal{Z}^{n_s}
\right)
\! - \! 
%\mathcal{Z}^{n_s}
\bm{\theta}^{n_s}
\right),
\end{equation*}
where
$m(\cdot)$ measures the MRR performance on a single fact $s$ and $T(\cdot)$ denotes the sufficient statistic \cite{akimoto2019adaptive}.

\subsection{Comparison with AutoSF}
\label{sssec:autosf}

The closest work in the literature of the N-ary relational data is AutoSF~\cite{zhang2019autosf}, 
which proposes a NAS approach to search data-specific and bilinear scoring functions.
The proposed S2S differs from AutoSF from three perspectives:
task scenario, search space, and search algorithm.
AutoSF concerns the binary relational data based on the unified representation of embedding approaches.
We generalize the task scenario from the binary to N-ary relational data.
Correspondingly, we propose a novel search space where we can search for sparse core tensor in N-ary relational data.
And the search space of AutoSF is a special case of our proposed sparse core tensor.
Third, AutoSF develops an inefficient search algorithm, that requires training hundreds of candidates to convergence.
However, the N-ary relational data requires a much larger search space, which results in the efficiency issue become even more severe.
In this paper, we enable 
an efficient search algorithm 
%\footnote{+qm+ the name ASNG is not mentioned in Section 4.1/4.2.}
ASNG \cite{akimoto2019adaptive} in our scenario, 
where the desired sparse core tensor can be searched by only training once.

\section{Experiments}
All codes are implemented with PyTorch and run on a single Nvidia RTX2080Ti GPU.

\subsection{Experimental Setup}
\subsubsection{Data Sets}
%\footnote{+qm+ need to compare with some work using other techniques
%	on specification,
%	see \cite{wen2016learning}.
%	Basically,
%	you want the core tensor to be sparse.}
To demonstrate the performance of the proposed method, 
we conduct experiments on N-ary relational data with both various 
%\footnote{+qm+ same ``fixed arity'' v.s. ``fixed arity'', check consistency.
%\# shimin: checked
%}
fixed arity (i.e., $n=2,3,4$) and mixed arity.
The statistics of data sets are summarized in Table~\ref{tab:data}.

\begin{itemize}[leftmargin=*]
\item
\textit{N-ary relational data.} 
We follow \cite{wen2016representation,zhang2018scalable,guan2019link,liu2020generalizing,rosso2020beyond} to compare various models on WikiPeople \cite{guan2019link} and JF17K \cite{wen2016representation}.
WikiPeople mainly concerns the entities of typing humans, which is extracted from Wikidata.
And JF17K is developed from Freebase \cite{bollacker2008freebase}.
Then, 
for $3$-ary and $4$-ary relational data,
we follow GETD \cite{liu2020generalizing} to filter out all $3$-ary and $4$-ary facts from WikiPeople and JF17K respectively, named as JF17K-3, JF17K-4, WikiPeople-3, and WikiPeople-4.

\item 
\textit{Binary relational data (aka. knowledge graph).} 
We follow~\cite{bordes2013translating,trouillon2017knowledge,kazemi2018simple,balazevic2019tucker,zhang2019autosf} to conduct experiments on four public benchmark data sets: 
WN18~\cite{bordes2013translating}, 
WN18RR~\cite{dettmers2018convolutional}, 
FB15k~\cite{bordes2013translating}, 
FB15k237~\cite{toutanova2015observed}.
WN18RR and FB15k237 are variants of WN18 and FB15k respectively by removing duplicate and inverse relations.
\end{itemize}

\begin{table}[h]
	\caption{Summary of benchmark N-ary relational data sets.}
	\label{tab:data}
	\setlength\tabcolsep{3pt}
	\centering
	%\vspace{-10px}
	\begin{tabular}{c|c|c|c|c|c|c}
		\toprule
		&  Data set   &   \#ent   & \#rel  & \#Tra & \#Val& \#Tst \\ \midrule
		&  WikiPeople-3   &   12,270   &  66 & 20,656  & 2,582 & 2,582\\
		fixed  &  WikiPeople-4  &   9,528   & 50  & 12,150  & 1,519 & 1,519 \\ 
		n-ary&  JF17K-3  &   11,541   & 104  & 27,635  & 3,454 & 3,455 \\ 
		&  JF17K-4  &    6,536  & 23  & 7,607  & 951 & 951 \\ \midrule
		mixed  &  WikiPeople   &   47,765   & 707  & 305,725  & 38,223 & 38,281\\
		n-ary&  JF17K   &   28,645   & 322  &  76,379 & - & 24,568 \\ \bottomrule
		&  WN18   &   40,943   &  18 & 141,442  & 5,000 & 5,000 \\ 
		binary &  WN18RR   &   40,943   &  11 & 86,835  & 3,034 & 3,134 \\ 
		&   FB15k  &  14,951   & 1,345 &  484,142 &50,000 &59,071 \\
		&   FB15k237  &  14,541  & 237  &  272,115 & 17,535 & 20,466\\ \midrule
	\end{tabular}
\end{table}

\begin{table*}[t]
	\centering
	\caption{The link prediction results on the WikiPeople-3/4. 
	}
	%\vspace{-10px}
	\label{table:linkPredictionNary1}
	\begin{tabular}{c|c|cccc|cccc}
		\toprule
		\multirow{2}{*}{model type}&
		\multirow{2}{*}{model}         &      \multicolumn{4}{c|}{WikiPeople-3}       &      \multicolumn{4}{c}{WikiPeople-4}         \\
		&&        MRR     &      Hits@1   &  Hits@3   &  Hits@10      &        MRR    &      Hits@1   &Hits@3 &      Hits@10         \\ \midrule
		translation &RAE \cite{zhang2018scalable}   &  0.239 & 0.168 & 0.252 & 0.379 &0.150 & 0.080 & 0.149 & 0.273  \\ \midrule
		&NaLP~\cite{liu2020generalizing}    &  0.301 & 0.226 & 0.327 & 0.445 & 0.342 & 0.237 & 0.400 & 0.540  \\ 
		neural network & HINGE \cite{rosso2020beyond} & 0.338 & 0.255 & 0.360 & 0.508 & 0.352 & 0.241 & 0.419 & 0.557 \\
		& NeuInfer \cite{guan2020neuinfer} & 0.355 & 0.262 & 0.388 & 0.521 & 0.361 & 0.255 & 0.424 &  0.566 \\ \midrule
		&n-CP~\cite{liu2020generalizing}    &  0.330 & 0.250 & 0.356 & 0.496 & 0.265 & 0.169& 0.315 & 0.445  \\ 
		tensor&n-TuckER~\cite{liu2020generalizing}    & 0.365  & 0.274& 0.400 & 0.548 &0.362 & 0.246 & 0.432 & 0.570  \\ 
		decomposition&GETD~\cite{liu2020generalizing}   &  \underline{0.373} & \underline{0.284} & \underline{0.401}  & \underline{0.558} & \underline{0.386} &\underline{0.265}  &\underline{0.462}  &\underline{0.596}   \\ \cmidrule{2-10}
		&S2S      &  \textbf{0.386 } & \textbf{0.299}  &\textbf{0.421}   & \textbf{0.559 }   &\textbf{0.391}   & \textbf{0.270}  & \textbf{0.470}   &\textbf{0.600 }   \\ 
		%&S2S (mixed)    & \textbf{0.408}  & \textbf{0.310}  & \textbf{0.433}  & \textbf{0.577}  & \textbf{0.418}  & \textbf{0.288}  & \textbf{0.485}  &\textbf{0.617}  \\ 
		\bottomrule
	\end{tabular}
\end{table*}

\begin{table*}[t]
	\centering
	\caption{The link prediction results on the JF17K-3/4.}
	%\vspace{-10px}
	\label{table:linkPredictionNary2}
	\begin{tabular}{c|c|cccc|cccc}
		\toprule
		\multirow{2}{*}{model type}         &
		\multirow{2}{*}{model}         &      \multicolumn{4}{c}{JF17K-3}      &     \multicolumn{4}{c}{JF17K-4}      \\
		&&        MRR    &      Hits@1    & Hits@3 &    Hits@10      &        MRR    &      Hits@1    &  Hits@3   & Hits@10     \\ \midrule
		translation &RAE \cite{zhang2018scalable}  & 0.505 & 0.430  & 0.532 & 0.644 & 0.707  & 0.636 & 0.751 & 0.835  \\ \midrule
		&NaLP~\cite{liu2020generalizing}  & 0.515 & 0.431 & 0.552 & 0.679 & 0.719  & 0.673& 0.742 &  0.805 \\ 
		neural network & HINGE \cite{rosso2020beyond} & 0.587 & 0.509 & 0.621 & 0.738 & 0.745 & 0.700 & 0.775 & 0.842  \\
		& NeuInfer \cite{guan2020neuinfer} & 0.622 & 0.533 & 0.658 &  0.770 & 0.765 & 0.722 & 0.808 & 0.871 \\ \midrule
		&n-CP~\cite{liu2020generalizing}  & 0.700 & 0.635 & 0.736 & 0.827 &  0.787 & 0.733 & 0.821 &0.890  \\ 
		tensor  &n-TuckER~\cite{liu2020generalizing}    & 0.727 & 0.664& 0.761 & 0.852 & 0.804  & 0.748 & 0.841 &0.902  \\ 
		decomposition &GETD~\cite{liu2020generalizing}  & \underline{0.732}  & \underline{0.669 } & \underline{0.764}  & \underline{0.856}  & \underline{0.810}  & \underline{0.755}  & \underline{0.844}  & \underline{0.913}   \\\cmidrule{2-10}
		& S2S    & \textbf{0.740}   & \textbf{0.676} & \textbf{0.770}  &\textbf{0.860}  & \textbf{0.822}  & \textbf{0.761} & \textbf{0.853}  & \textbf{0.924}  \\ 
		%& S2S (mixed)  & \textbf{0.752} & \textbf{0.686} & \textbf{0.788} & \textbf{0.870} & \textbf{0.831} & \textbf{0.774} & \textbf{0.868} & \textbf{0.934} \\ 
		\bottomrule
	\end{tabular}
\end{table*}

\subsubsection{Evaluation Metrics}

We test the performance of our proposed method on the link prediction task \cite{zhang2019autosf,zhang2019quaternion}, which is utilized to complete the N-ary relational data.
Given a n-ary fact $s=(r_{i_r},e_{i_1},\dots,e_{i_n})$, the embedding model assumes one entity in this fact is missing, then it ranks all candidate entities based their scores.
We adopt the standard metrics~\cite{bordes2013translating,wang2014knowledge}:
\begin{itemize}[leftmargin=*]
\item Mean Reciprocal Ranking (MRR):
$\nicefrac{1}{|S|}\sum_{i=1}^{|S|}\nicefrac{1}{\text{rank}_i}$, where $\text{rank}_i$ is the ranking result, and 

\item Hits@$T$:  $\nicefrac{1}{|S|}\sum_{i=1}^{|S|}\mathbb{I}(\text{rank}_i \le T)$, where $\mathbb{I}(\cdot)$ is the indicator function and $T\in \{1,3,10\}$.
\end{itemize}
Note that the higher MRR and Hits@$T$ indicate the better quality of embeddings.
And all metrics are reported in a ``filter'' setting~\cite{bordes2013translating}, where the ranking computation is not include the corrupted facts that exist in train, valid and test data sets.

\subsubsection{Hyper-parameter Settings}
The proposed method mainly contains two steps, searching for sparse core tensor, and training the searched core tensor to convergence.
In the search strategy, we utilize the default hyper-parameters implemented in ASNG~\cite{akimoto2019adaptive} for optimizing the core tensor weight.
Then, 
we train the embeddings on the searched hyper-parameter set, which is achieved by tuning CP/n-CP \cite{lacroix2018canonical} with the help of HyperOpt \cite{bergstra2013making}.
This hyper-parameter set includes the learning rate, decay rate, batch size, and embedding dimension.
Besides, we optimize the embedding with Adam algorithm \cite{kingma2014adam}.
To determine the sparse core tensor for evaluation, 
we run S2S five times and report average results.

\subsection{N-ary Relational Data with Fixed Arity}

We first compare our S2S with other models in N-ary relational data with fixed arity,
i.e., WikiPeople-3, WikiPeople-4, JF17K-3, and JF17K-4.
We adopt the n-ary tensor decomposition models, n-CP \cite{lacroix2018canonical}, n-TuckER \cite{balazevic2019tucker}, and GTED \cite{liu2020generalizing}, as baselines.
%\footnote{+qm+ why HyperE is not compared?}
As for the translational model, we only include the advanced RAE \cite{zhang2018scalable} since it is an upgraded version of m-TransH \cite{wen2016representation}.
And we also compare the neural network models NaLP \cite{guan2019link}, HINGE \cite{rosso2020beyond}, and NeuInfer~\cite{guan2020neuinfer}.

\subsubsection{Benchmark Comparison}
We demonstrate the performance on N-ary relational data with fixed arity in 
Table~\ref{table:linkPredictionNary1}-\ref{table:linkPredictionNary2}.
We can observe that tensor decomposition models (n-CP, n-TuckER, GETD, and S2S) generally have better performance than other models
in Table~\ref{table:linkPredictionNary1}-\ref{table:linkPredictionNary2}.
That is mainly because tensor decomposition models have strong expressiveness.
Then, although n-CP requires the lowest complexity $O(n_ed_e+n_rd_r)$ among tensor decomposition models, it does not achieve the high performance as other tensor decomposition models (e.g., n-TuckER, GETD, and S2S).
That is because n-CP does not introduce a core tensor like tensor decomposition models, which can enable the embedding to share the domain knowledge.
Furthermore, we can observe that GETD performs better than n-TuckER since GETD partially addresses the over-parameterized problem in n-TuckER.
%\footnote{+qm+ merge this paragraph into the last one,
%	you do not need to say so much.
%	``Data-specific core tensors'' are fine.
%\# shimin: revised
%}
Overall, our proposed S2S consistently achieves state-of-the-art performance on all benchmark data sets by the data-specific core tensor design.

\begin{table*}[t]
	\centering
	\caption{The link prediction results on the multi-relational data set with mixed arity.}
	%\vspace{-10px}
	\label{table:linkPredictionMix}
	\begin{tabular}{c|cccc|cccc}
		\toprule
		\multirow{2}{*}{model}         &      \multicolumn{4}{c|}{WikiPeople}       &      \multicolumn{4}{c}{JF17K}   \\
		&        MRR     &      Hits@1   &  Hits@3 &   Hits@10      &        MRR    &      Hits@1    &  Hits@3   & Hits@10      \\ \midrule
		RAE \cite{zhang2018scalable}    & 0.172 & 0.102 & 0.182 & 0.320 & 0.310 & 0.219 & 0.334 & 0.504 \\ 
		NaLP \cite{guan2019link} & 0.338 & 0.272 & 0.364 & 0.466 & 0.366 & 0.290 & 0.391 & 0.516 \\ 
		HINGE \cite{rosso2020beyond} & 0.333  & 0.259 & 0.361 & 0.477 & 0.473 & 0.397 & 0.490 & 0.618 \\ 
		NeuInfer \cite{guan2020neuinfer} & \underline{0.350} & \textbf{0.282} & \underline{0.381} & 0.467 & \underline{0.517}  & \underline{0.436}  & \underline{0.553}  & \underline{0.675}  \\ 
		HypE \cite{fatemi2019knowledge} & 0.292 & 0.162 & 0.375 & \underline{0.502}  & 0.494 & 0.408 & 0.538 & 0.656 \\ \midrule
		S2S        &  \textbf{0.372}   &\underline{0.277} & \textbf{0.439}   & \textbf{0.533}   & \textbf{0.528}   & \textbf{0.457}   & \textbf{0.570}   & \textbf{0.690}   \\ \bottomrule
	\end{tabular}
\end{table*}

\begin{figure*}[ht]
	\centering
	\subfigure[WikiPeople-3.]
	{\includegraphics[width=0.25\linewidth]{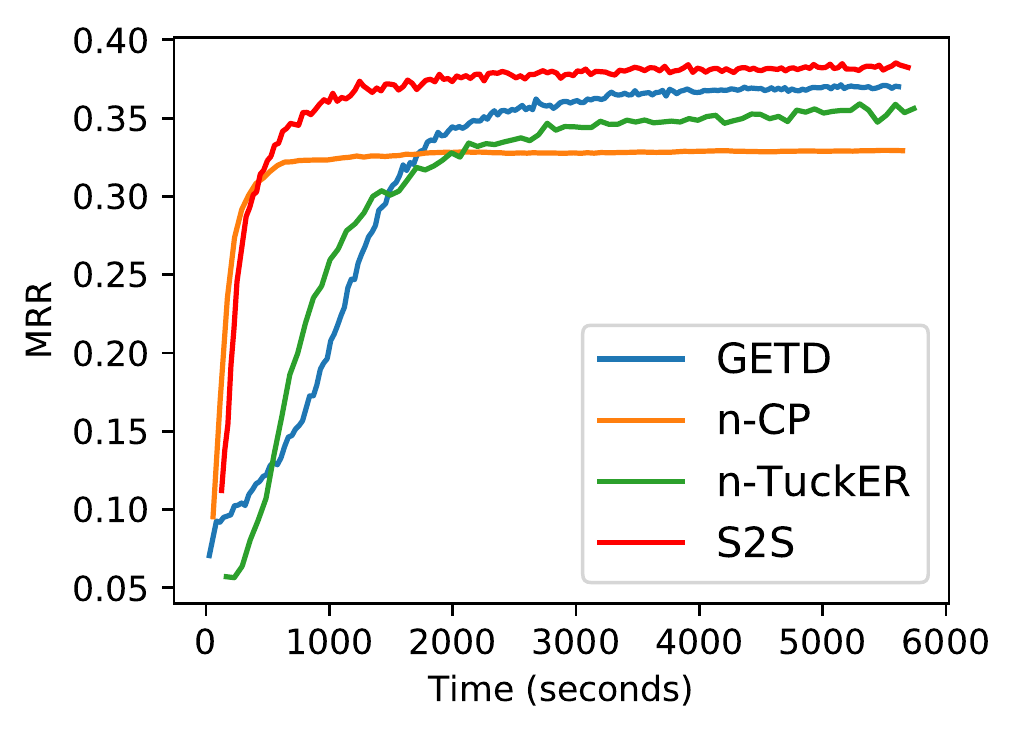}}
	\subfigure[JF17K-3.]
	{\includegraphics[width=0.235\linewidth]{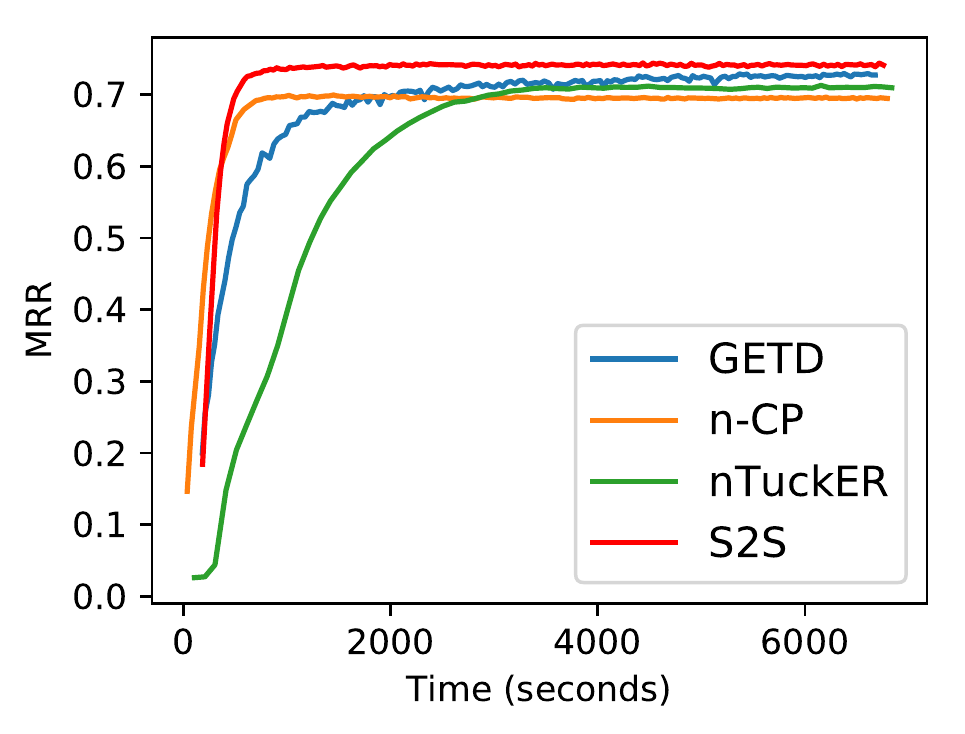}}
	\subfigure[WikiPeople-4.]
	{\includegraphics[width=0.24\linewidth]{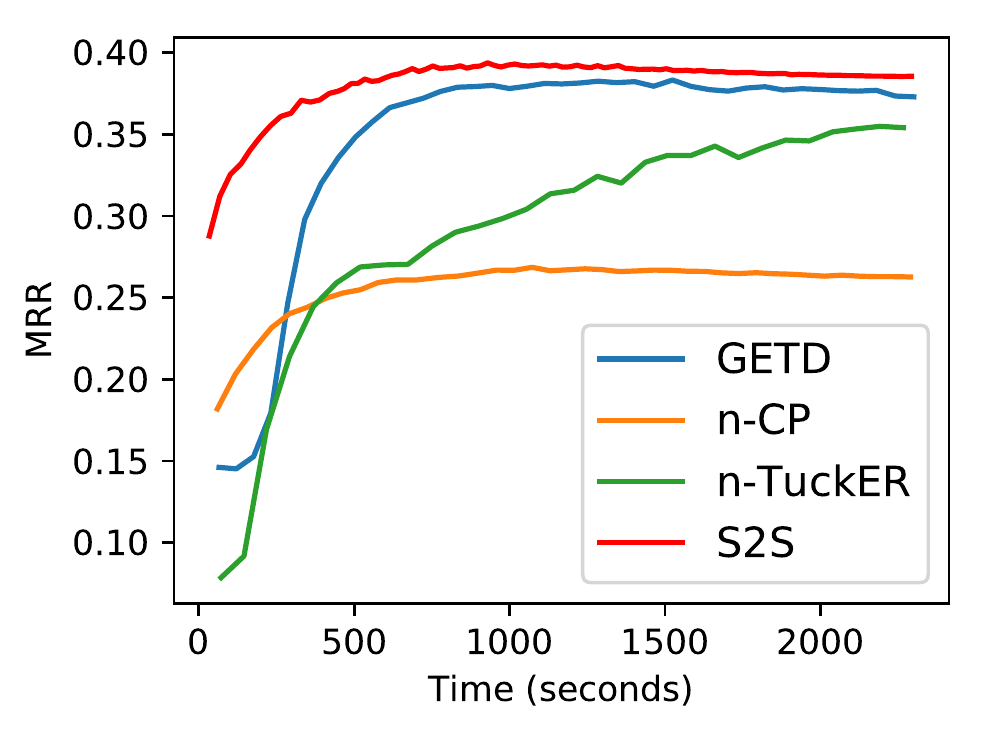}}
	\subfigure[JF17K-4.]
	{\includegraphics[width=0.238\linewidth]{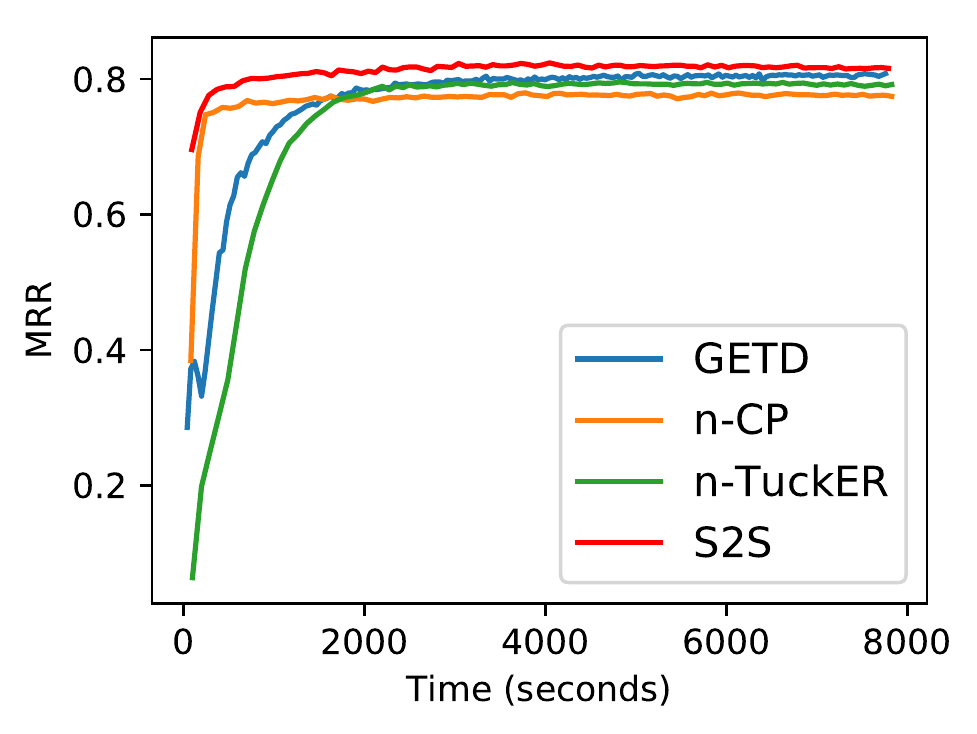}}
	%\vspace{-10px}
	\caption{Testing MRR v.s. clock time (seconds) with fixed arity.}
	\label{figs:exp:LC-fixed}
\end{figure*}

\begin{figure}[!t]
	\centering
	\subfigure[WikiPeople.]
	{\includegraphics[width=0.495\linewidth]{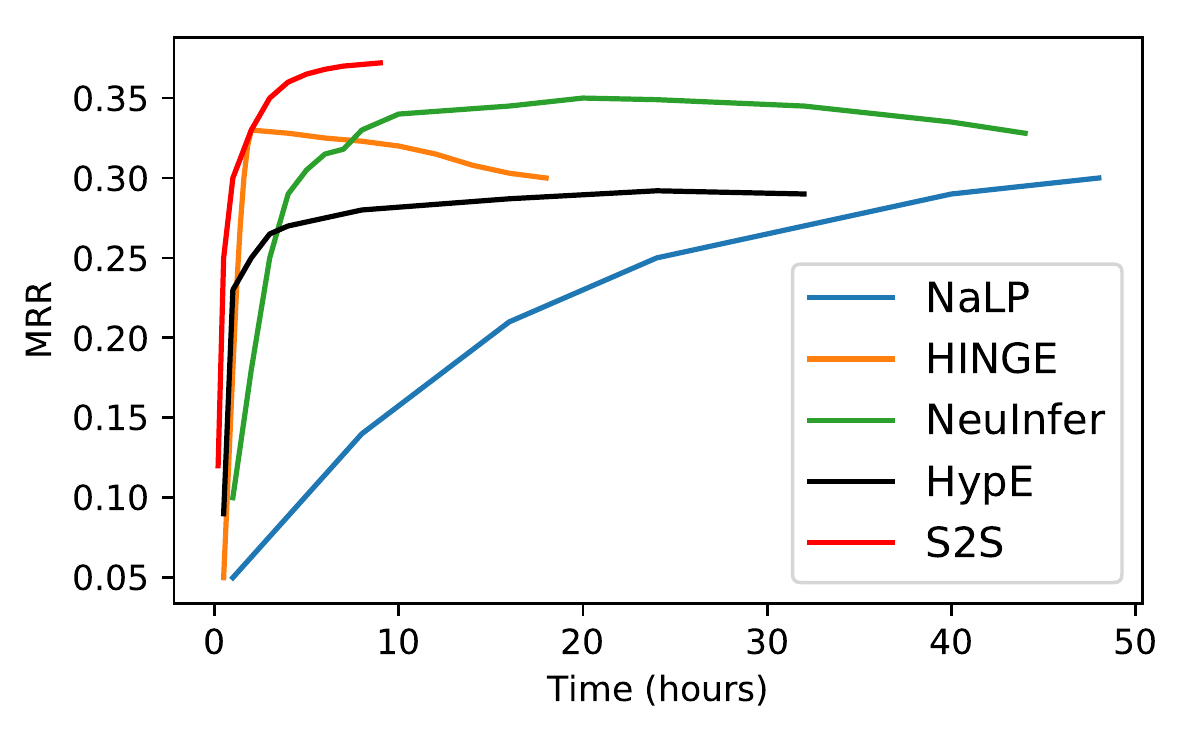}}
	\subfigure[JF17K.]
	{\includegraphics[width=0.49\linewidth]{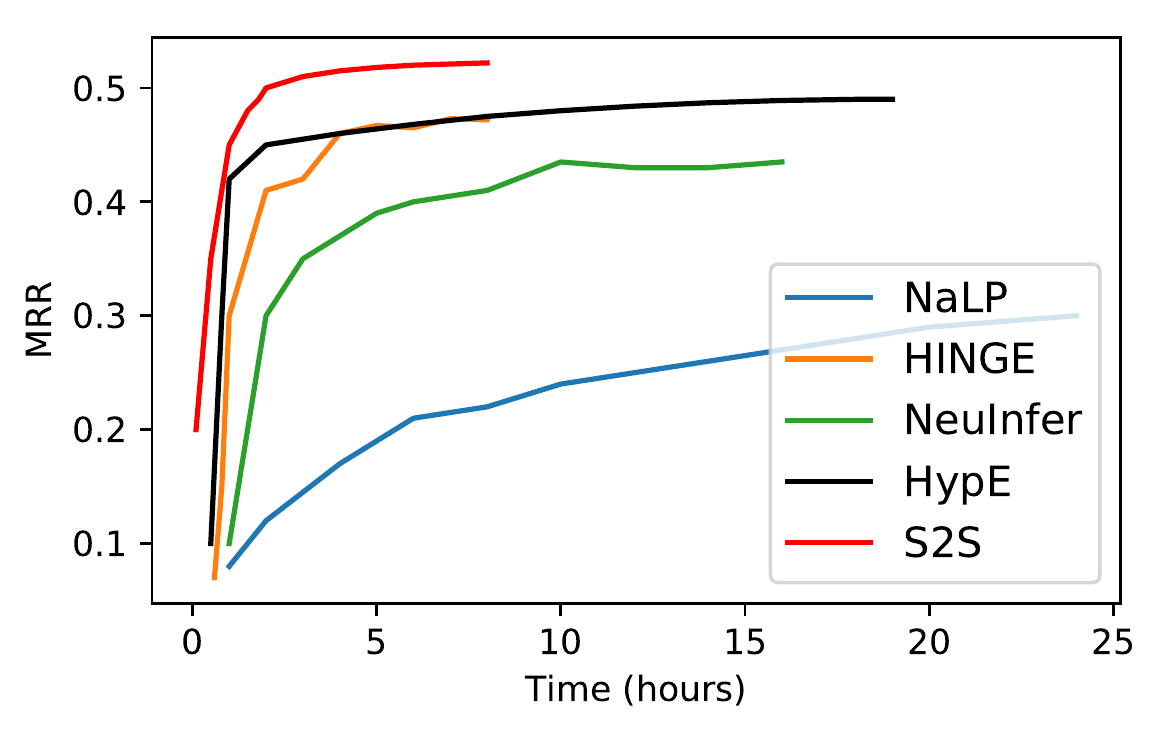}}
	%\vspace{-10px}
	\caption{Testing MRR v.s. clock time (hours) with mixed arity.}
	\label{figs:exp:LC-Mix}
\end{figure}

\subsubsection{Training Efficiency}
Moreover, we show the learning curve of several tensor decomposition models to compare the efficiency in 
Figure~\ref{figs:exp:LC-fixed}.
n-CP converges fastest due to the lowest model complexity.
The convergence rate of n-TuckER is the slowest since it requires the most complexity.
GETD converges much faster than n-TuckER because it reduces the complexity of the core tensor.
And the convergence of S2S is only slower than that of n-CP and faster than GETD and n-TuckER due to our sparse core tensor design.

\subsection{N-ary Relational Data with Mixed Arity}

To demonstrate the importance of mixed arity and superiority of our S2S, 
we compare it with other advanced models on the N-ary relational data,
i.e., Wiki-People \cite{guan2019link} and JF17K \cite{wen2016representation}.
We include the advanced translational model RAE \cite{zhang2018scalable}, the neural networks models NaLP \cite{guan2019link}, HINGE \cite{rosso2020beyond} and NeuInfer \cite{guan2020neuinfer}, and a hybrid model HypE \cite{fatemi2019knowledge}.
%The recent work of BoxE \cite{abboud2020boxe} is not included due to no available code.
%\footnote{
%BoxE recently publish their code. Now how do we mention BoxE?
%}

\subsubsection{Benchmark Comparison}
We show the performance on N-ary relational data with mixed arity in Table~\ref{table:linkPredictionMix}.
Because of lack of expressive ability, the translational model RAE does not achieve good performance.
The neural network models \cite{guan2019link,rosso2020beyond,guan2020neuinfer} generally outperform the translational model RAE by leveraging complex networks.
On the contrary, S2S leads to state-of-the-art performance because of the expressive guarantee.

\subsubsection{Training Efficiency}
In 
Figure~\ref{figs:exp:LC-Mix}, it is obvious that the neural network models, 
i.e., NaLP \cite{guan2019link} and NeuInfer \cite{guan2020neuinfer}, 
require quite a long time to convergence.
That is because these two models 
%\footnote{+qm+ no, talk about number of parameters instead.
%	However, from Table~1,
%	it seems not to be a problem.}
utilize complex neural networks for training.
On the contrary, another neural network model HINGE \cite{rosso2020beyond} proposes a simple way to train the embeddings, which converges much fast.
Among all models, S2S achieves the fastest convergence since it requires less complexity with the sparse core tensor.

\subsection{Binary Relational Data}

To further demonstrate the performance of the proposed method, we also compare S2S with classical embedding approaches on binary relational data,
i.e.,
WN18~\cite{bordes2013translating}, 
WN18RR~\cite{dettmers2018convolutional}, 
FB15k~\cite{bordes2013translating}, 
FB15k237~\cite{toutanova2015observed}.
We include the most advanced translational model RotatE \cite{sun2019rotate} due to its outstanding performance among translational models.
We also compare two popular neural network models, ConvE \cite{dettmers2018convolutional} and HypER \cite{balavzevic2019hypernetwork}.
As for tensor-based models, we include DistMult \cite{yang2014embedding}, ComplEx \cite{trouillon2017knowledge}, SimplE \cite{kazemi2018simple}, HolEX \cite{xue2018expanding},  QuatE~\cite{zhang2019quaternion}, and TuckER \cite{balazevic2019tucker}.
Moreover, we include the recent scoring function search method, AutoSF \cite{zhang2019autosf}, which only concerns the binary relational data as mentioned in Section~\ref{sssec:autosf}.

\subsubsection{Benchmark Comparison}
The ranking performance is in Table~\ref{table:linkPrediction}. 
It is clear that classical models cannot consistently achieve good performance on various data sets,
since these models are not data-specific.
AutoSF can search for a suitable scoring function for each data set and consistently achieve outstanding performance.
The proposed S2S is also data-specific, which aims to search proper sparse core tensor for any given data.
Overall, S2S consistently achieves state-of-the-art performance in all data sets.

\begin{table*}[t]
	\centering
	\caption{Comparison of the proposed S2S and state-of-the-art scoring functions on the link prediction task.}
	\label{table:linkPrediction}
	\setlength\tabcolsep{2.5pt}
	%\vspace{-10px}
	\begin{tabular}{c|c|ccc|ccc|ccc|ccc}
		\toprule
		\multirow{2}{*}{model} & \multirow{2}{*}{model}         &      \multicolumn{3}{c|}{WN18}       &      \multicolumn{3}{c|}{WN18RR}      &      \multicolumn{3}{c|}{FB15k}      &     \multicolumn{3}{c}{FB15k237}      \\ 
		& &        MRR     &      Hits@1   &      Hits@10      &        MRR    &      Hits@1    &      Hits@10       &        MRR    &      Hits@1    &      Hits@10      &        MRR    &      Hits@1     &      Hits@10      \\ \midrule
		translation &RotatE~\cite{sun2019rotate}       &       0.949   &  0.944   &  0.959   &       0.476   & 0.428    &      \underline{0.571}   &   0.797   &     0.746       &       0.884       &       0.338     &  0.241  &       0.533   \\ \midrule
		neural &ConvE~\cite{dettmers2018convolutional} &       0.943    & 0.935  &       0.956       &       0.460    &  0.390  &       0.480        &       0.754   &  0.670   &       0.873       &       0.316   &  0.239    &       0.491       \\ 
		network&HypER \cite{balavzevic2019hypernetwork} & 0.951 & \underline{0.947} & 0.958	  & 0.465    &0.436  &  0.522   & 0.790  & 0.734 &  0.885   &   0.341  & 0.252 &  0.520  \\ \midrule
		&HolEX~\cite{xue2018expanding}      &       0.938     & 0.930  &       0.949       &         -     &   -  &         -         &0.800 &      0.750        &       0.886       &         -     &    -  &        -         \\
		&QuatE~\cite{zhang2019quaternion}    &       0.950     & 0.945  &  0.959 &0.488 & 0.438 &   \textbf{0.582}   & 0.782 &  0.711 &       0.900       &       0.348      & 0.248  &       0.550       \\
		tensor&DistMult~\cite{yang2014embedding}    &       0.821      & 0.717  &       0.952       &       0.443    & 0.404   &       0.507        &       0.817  &     0.777 &       0.895       &       0.349   &   0.257   &       0.537       \\
		decomposition&ComplEx~\cite{trouillon2017knowledge}  & 0.951 &  0.945    &  0.957       &       0.471       & 0.430   &    0.551        &       0.831    &  0.796  & \underline{0.905} &       0.347     &  0.254  &       0.541       \\
		&SimplE~\cite{kazemi2018simple}     &       0.950   &   0.945  &  0.959  &       0.468      & 0.429  &       0.552        &       0.830   &  0.798  &       0.903       &       0.350     &  0.260  &       0.544       \\
		&TuckER~\cite{balazevic2019tucker}    &  \underline{0.953}   & \textbf{0.949 }&  0.958 &       0.470       &  0.443 &    0.526        &       0.795       & 0.741 &      0.892       & 0.358 & 0.266 &  0.544 \\ 
		&GETD~\cite{liu2020generalizing}             &   0.948   & 0.944 & 0.954 & -& - & - &  0.824& 0.787& 0.888 &  - & - & -  \\ \midrule
		NAS &AutoSF \cite{zhang2019autosf} &  0.952  & \underline{0.947} & \underline{0.961}  &  \underline{0.490}   &\underline{0.451}  &  0.567   & \textbf{0.853} & \textbf{0.821} &   \textbf{0.910} &   \underline{0.360}  & \underline{0.267} & \underline{0.552}    \\ \cmidrule{2-14}
		& S2S        &       \textbf{0.955}  &  \textbf{0.949}    &  \textbf{0.963}   &  \textbf{0.498} & \textbf{0.455}  & \underline{0.577} &  \underline{0.850} & \underline{0.820}  &  \textbf{0.910}   &   \textbf{0.368}   & \textbf{0.270} &  \textbf{0.559 }  \\ \bottomrule
	\end{tabular}
\end{table*}

\subsection{Search Efficiency}

%\footnote{+qm+ need to first clarify the meaning of ``training'' v.s. ``search efficiency''. \# shimin: revised}
%\footnote{+qm+ how many times experiments are run,
%	and better add STD to running time (Table~\ref{table:time}). \# shimin: revised, the number of runs mentioned in Sec 5.1.3}
To investigate the search efficiency of the 	proposed method, 
we summarize the running time of S2S and other models on 4 binary data sets in 
Table~\ref{table:time}.
We compare S2S with AutoSF in terms of the score function search time, and stand-alone training time of searched score function.
Note that S2S sets the embedding dimension to 512 in the search procedure for all data sets.
As for stand-alone training, we set embedding dimension for all models at 1024.
We utilize the simplest tensor decomposition model DistMult~\cite{yang2014embedding} as the benchmark.
In stand-alone training, S2S significantly reduces the training time compared with TuckER since it sparsifies the core tensor of TuckER.
And the training time of the scoring function searched by S2S is a little longer than DistMult.
That is because S2S searches a slightly more complex core tensor than DistMult's as illustrated in Figure~\ref{fig:coreTensor} (b) and Figure~\ref{fig:sparseCore} (a).
Compared with another search approach AutoSF, S2S significantly reduces the search cost.
AutoSF adopts the stand-alone evaluation mechanism, which requires training the hundreds of candidate scoring functions to convergence.
But the proposed S2S enables an efficient search algorithm ASNG \cite{akimoto2019adaptive}, where the proper scoring function can be searched by only training once (i.e., one-shot manner).
Furthermore, S2S searches only take a bit more time than DistMult since it needs to update the architecture parameter in search.
In summary, the proposed method is very efficient in terms of search and stand-alone training.

\begin{table}[!t]
\caption{Running time (in hours) analysis of several models.}
\label{table:time}
\small
\setlength\tabcolsep{2pt}
\centering
%\vspace{-10px}
\begin{tabular}{c|c|c|c|c|c|c}
	\toprule
	\multirow{2}{*}{data set} & \multirow{2}{*}{DistMult} &  \multicolumn{2}{c|}{S2S}  & \multicolumn{2}{c|}{AutoSF}  & \multirow{2}{*}{TuckER} \\
	     \cmidrule{3-6}       &                           &   Search    &   Training   &    Search     &   Training   &                         \\ \midrule
	          WN18            &        1.9$\pm$0.1        & 2.0$\pm$0.2 & 2.4$\pm$0.1  & 65.7$\pm$3.0  & 2.4$\pm$0.1  &      25.4$\pm$1.5       \\ \midrule
	         WN18RR           &        0.4$\pm$0.1        & 1.3$\pm$0.1 & 0.6$\pm$0.1  & 38.6$\pm$1.9  & 0.6$\pm$0.1  &      18.7$\pm$1.1       \\ \midrule
	          FB15k           &        8.4$\pm$0.2        & 4.8$\pm$0.2 & 11.1$\pm$0.4 & 127.1$\pm$5.2 & 10.9$\pm$0.3 &      38.7$\pm$2.9       \\ \midrule
	        FB15k237          &        2.6$\pm$0.1        & 3.3$\pm$0.3 & 4.8$\pm$0.2  & 61.1$\pm$2.8  & 4.6$\pm$0.2  &      21.3$\pm$1.8       \\ \bottomrule
\end{tabular}
\end{table}

\subsection{Case Study}
%\footnote{+qm+ try to show distribution of operations \# shimin: revised}
Here, 
we demonstrate the number of operations of searched core tensor in the below Figure~\ref{figs:case}.
It indicates that S2S is data-specific, which can search various sparse core tensor $\mathcal{Z}^n$ for different data sets.
%We do not show the searched core tensor on higher arity due to the paper limit.

\begin{figure}[h]
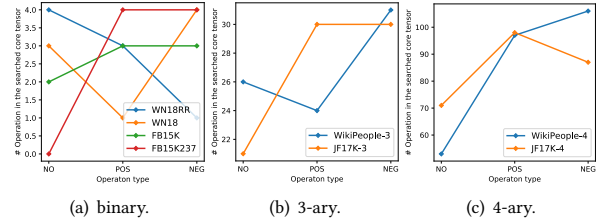

	\centering
	\subfigure[binary.]
	{\includegraphics[width=0.3\linewidth]{Figs/exp/case/case_binary.png}}
	\subfigure[3-ary.]
	{\includegraphics[width=0.295\linewidth]{Figs/exp/case/case_3_ary.png}}
	\subfigure[4-ary.]
	{\includegraphics[width=0.301\linewidth]{Figs/exp/case/case_4_ary.png}}
	%\vspace{-10px}
	\caption{The number of operations searched by S2S in several data sets. Note that NO, POS, NEG represents $\mathcal{I}_{0}^n$, $\mathcal{I}_{1}^n$, and $-\mathcal{I}_{1}^n$ respectively.}
	\label{figs:case}
\end{figure}

\subsection{Ablation Study}

\subsubsection{The Influence of the Joint Learning}
\label{sssec:ablationJoint}

As discussed in Section~\ref{sec:intro}, 
the tensor decomposition models only learn embedding from part of N-ary relational data, which causes the data sparsity issue to become more severe.
To verify this claim, we include another S2S (mixed) learned from N-ary relational data with mixed arity to compare the S2S (fixed) reported in Table~\ref{table:linkPredictionNary1} and Table~\ref{table:linkPredictionNary2}, which is learned from fixed arity.
It is obvious that S2S (mixed) achieves better performance, which demonstrates that only leveraging part of N-ary relational data indeed suffers from the data-sparsity issue.
This verifies that we need to propose a tensor decomposition model for the N-ary relational data learning.
We further discuss the effectiveness of proposed embedding sharing in Section~\ref{sssec:ablationEmbedShare}.

\begin{table}[h]
	\centering
	\caption{The performance comparison of S2S between fixed learning and joint learning.}
	%\vspace{-10px}
	\label{table:ablationSparsiyt}
	\begin{tabular}{c|cc|cc}
		\toprule
		\multirow{2}{*}{data set}         &      \multicolumn{2}{c|}{S2S (fixed)}  &      \multicolumn{2}{c}{S2S (mixed)}    \\
		&        MRR     &      Hits@10   &  MRR &   Hits@10         \\ \midrule
		WikiPeople-3 & {0.386} & 0.559 & {0.408}  & {0.577}  \\ 
		WikiPeople-4 & {0.391} & 0.600 & {0.418}& {0.617} \\ 
		JF17K-3 & {0.740} & {0.860} & {0.752}  & {0.870}  \\ 
		JF17K-4 & {0.822} & {0.924} & {0.831} & {0.934} \\ \bottomrule
	\end{tabular}
\end{table}

\subsubsection{The Influence of the Embedding Sharing Way}
\label{sssec:ablationEmbedShare}
In Section~\ref{sssec:ablationJoint}, we show that the sparsity issue exists when models only leverage part of N-ary relational data.
As discussed in Section~\ref{ssec:embedShare}, it is hard for tensor decomposition models to handle the N-ary relational data with mixed arity.
Directly sharing all embeddings across arities is too restrictive and lead to poor performance \cite{wen2016representation,zhang2018scalable,guan2019link}.
Therefore,
we propose to share embeddings based on segments.
To verify claims and investigate the influence of 
%\footnote{
%	\# shimin: added;
%	+qm+ better explain the exact implementation of baselines in Appendix.}
embedding sharing ways, 
we demonstrate the performance of 
several tensor decomposition models on WikiPeople and JF17K as in Figure~\ref{figs:exp:embedShare}.
Appendix~\ref{appendix:embedShare} introduces
the details of implementing embedding sharing into tensor decomposition models.

First, we can observe that
all tensor decomposition models achieve better performance with sharing embedding segments.
That is because embedding sharing not only makes the embedding learn from the low-arity fact in the high-order training but also maintain a part of high-order knowledge.
Second, it is clear that GTED and S2S achieve better performance than n-CP in N-ary relational data.
Unlike n-CP, GTED and S2S need to learn a core tensor for facts with every arity $n$.
The core tensor can encode the arity-specific knowledge, that further enhance the performance in joint learning.
%\footnote{+qm+ check my modification to Figure~\ref{figs:exp:embedShare}.
%	1) convert them into PNG (no need to keep PDF format);
%	2) cut extra white space in the PNG (better use Photoshop.)
%\# shimin: revised
%}

\begin{figure}[!t]
	\centering
	\subfigure[WikiPeople.]
	{\includegraphics[width=0.495\linewidth]{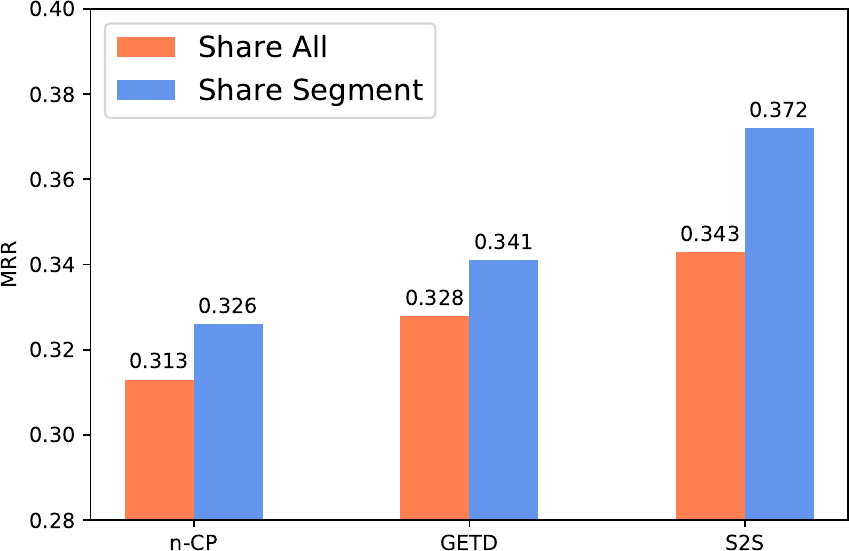}}
	\subfigure[JF17K.]
	{\includegraphics[width=0.495\linewidth]{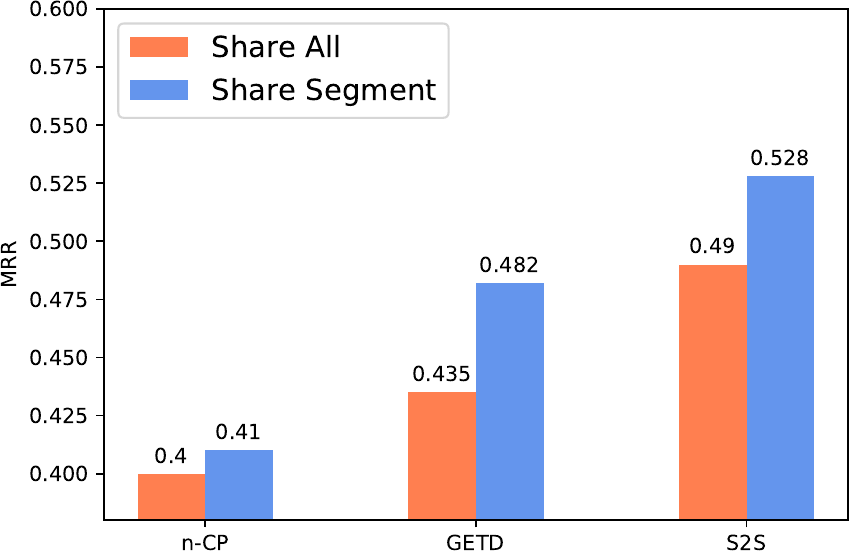}}
	%\vspace{-10px}
	\caption{The influence of different embedding sharing in tensor models.}
	\label{figs:exp:embedShare}
\end{figure}

\subsubsection{The Influence of the Model Complexity}
\label{sssec:ablationComplexity}
Previously, we discuss the negative effect of the over-parameterized issue in existing tensor decomposition models.
As mentioned in Section~\ref{sec:intro}, cubic or even larger model complexity is easy to make the model difficult to train.
Therefore, we here investigate the influence of model parameter size in 
%\footnote{+qm+ use the same color for the same method on all figure,
%	e.g., check S2S.}
Figure~\ref{figs:exp:modelPara}.
Note that we do not include the embedding as the model parameter since every model at least require $O(n_ed_e+n_rd_r)$ for embedding.
Thus we plot n-CP \cite{lacroix2018canonical} as a horizontal line since it has no extra parameter.
We can observe that S2S can achieve outstanding performance by requiring a small number of parameters.
And its performance does not vary greatly with the increase of model parameters.
On the contrary, GETD and n-TuckER require much larger parameter size to achieve the high performance.
And their model parameter setting will lead to significant differences in performance.
This may bring a difficulty to the training in practical, such as the careful selection of the size of model parameters.

\begin{figure}[ht]
	\centering
	\subfigure[WikiPeople-3.]
	{\includegraphics[width=0.46\linewidth]{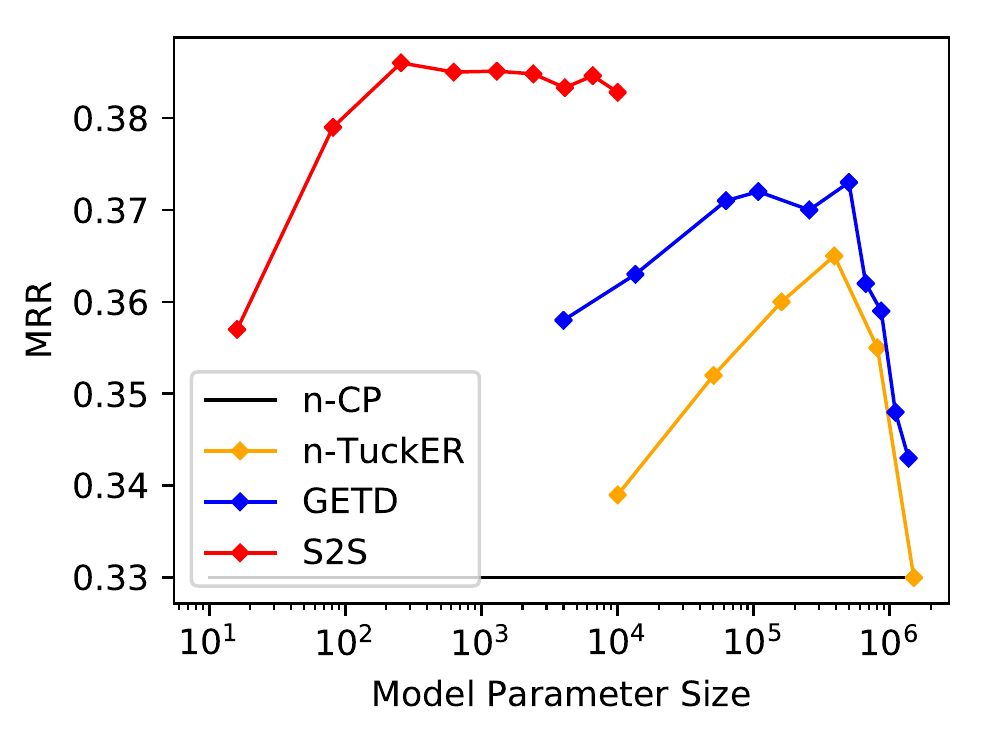}}
	\subfigure[JF17K-3.]
	{\includegraphics[width=0.45\linewidth]{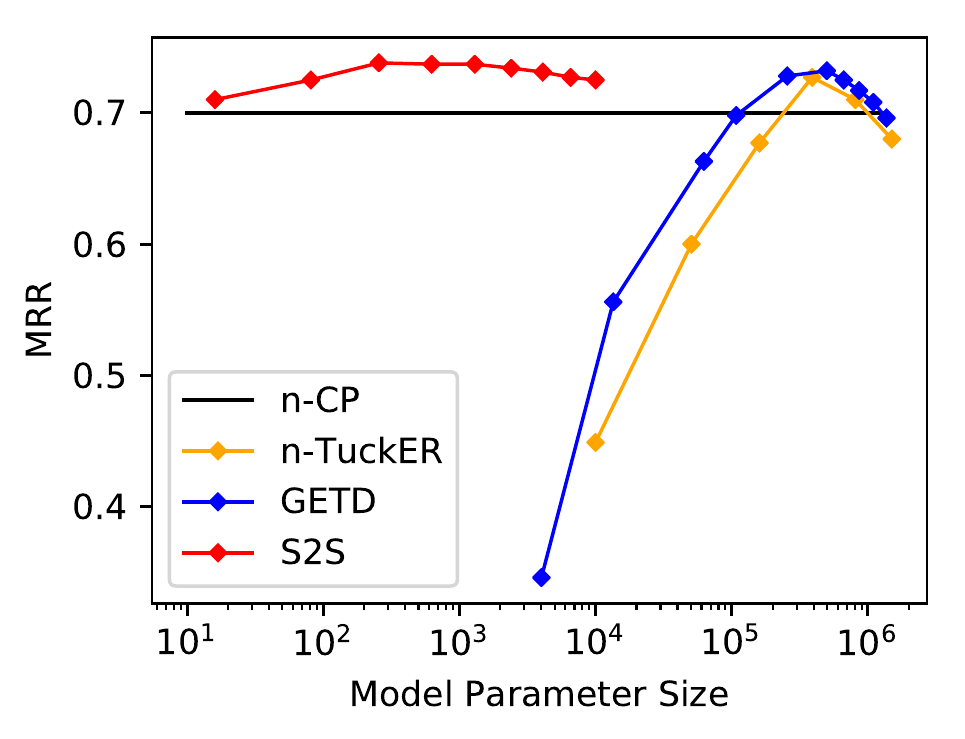}}
	%\vspace{-10px}
	\caption{The influence of model parameters.}
	\label{figs:exp:modelPara}
\end{figure}

\subsubsection{The Influence of the Structured Sparse Core Tensor}
\label{sssec:ablationL0}
%\footnote{+qm+ ``nSTuckER'' is better renamed with S2S(xxx), xxx can be name by you.}
We demonstrate the over-parameterization issue in Section~\ref{sssec:ablationComplexity}.
And we can observe that S2S achieves outstanding performance in 
Table 3-6.
To investigate the effectiveness of the proposed structured sparse core tensor,
we compare S2S with S2S(L0-reg),
%\footnote{+qm+ explain implementation details in the appendix.}
which encourages the sparse core tensor by $\ell_0$ constraint.
S2S(L0-reg) has the same number of non-zero elements as S2S,
its sparse pattern is not structured and non-zero elements can be arbitrarily distributed across the core tensor.
%we show the effectiveness of one random sparsifying method S2S(L0-reg) in Table~\ref{table:ablationRandom}, which directly constrains the same number of non-zero elements in TuckER's core tensors as S2S does.
%\footnote{+qm+ try to add best performance obtained by S2S into Table~\ref{table:ablationRandom}.
%	The comparison will be more clear.
%\# shimin: revised
%}
Results are in Table~\ref{table:ablationRandom}.
We can observe that the performance of S2S(L0-reg) is much worse than the performance of S2S as reported in Table~\ref{table:linkPredictionNary1}-\ref{table:linkPredictionNary2}.
That is because the unstructured sparse core tensor cannot capture the correlation between embeddings as well as the structured one.
The implementation details have been introduced in Appendix~\ref{appendix:L0}.

%\begin{table}[ht]
%	\centering
%	\caption{The link prediction performance of S2S(L0-reg).}
%	%\vspace{-10px}
%	\label{table:ablationRandom}
%	\begin{tabular}{c|cccc}
%		\toprule
%		\multirow{2}{*}{data set}         &      \multicolumn{4}{c}{S2S(L0-reg)}      \\
%		&        MRR     &      Hits@1   &  Hits@3 &   Hits@10         \\ \midrule
%		WikiPeople-3 & 0.289 & 0.210 & 0.307 & 0.426  \\ 
%		WikiPeople-4 & 0.288 & 0.177 & 0.331 & 0.457  \\ 
%		JF17K-3 & 0.665 & 0.580 & 0.693 & 0.774  \\ 
%		JF17K-4 & 0.755 & 0.701 & 0.786 & 0.822 \\ \bottomrule
%	\end{tabular}
%\end{table}

\begin{table}[!t]
	\centering
	\caption{The link prediction performance of S2S(L0-reg).}
	%\vspace{-10px}
	\label{table:ablationRandom}
	\begin{tabular}{c|cc|cc}
		\toprule
		\multirow{2}{*}{data set}         &      \multicolumn{2}{c|}{S2S(L0-reg)}  &      \multicolumn{2}{c}{S2S}    \\
		&        MRR     &   Hits@10       & MRR & Hits@10  \\ \midrule
		WikiPeople-3 & 0.289 & 0.426  &0.386 & 0.559\\ 
		WikiPeople-4 & 0.288 & 0.457  & 0.391 & 0.600\\ 
		JF17K-3 & 0.665 & 0.774  & 0.740 & 0.860 \\ 
		JF17K-4 & 0.755 & 0.822 & 0.822 &0.924 \\ \bottomrule
	\end{tabular}
\end{table}

\subsubsection{The Impact of the Number of Segments}
%\footnote{
%	\# shimin: revised before;
%	+qm+ I am kind of confused with $M$ v.s. $N$.
%	Try to make it clear near \eqref{eq:mix}.}
We here investigate the effect of the different number of segments (i.e., $M$) on the N-ary relational data learning with fixed arity in Figure~\ref{figs:exp:segment}.
We can observe that S2S has good performance when the number of segments is set to 4. And the effect is not sensitive to the parameter setting.

\begin{figure}[ht]
	\centering
	\subfigure[WikiPeople-4.]
	{\includegraphics[width=0.45\linewidth]{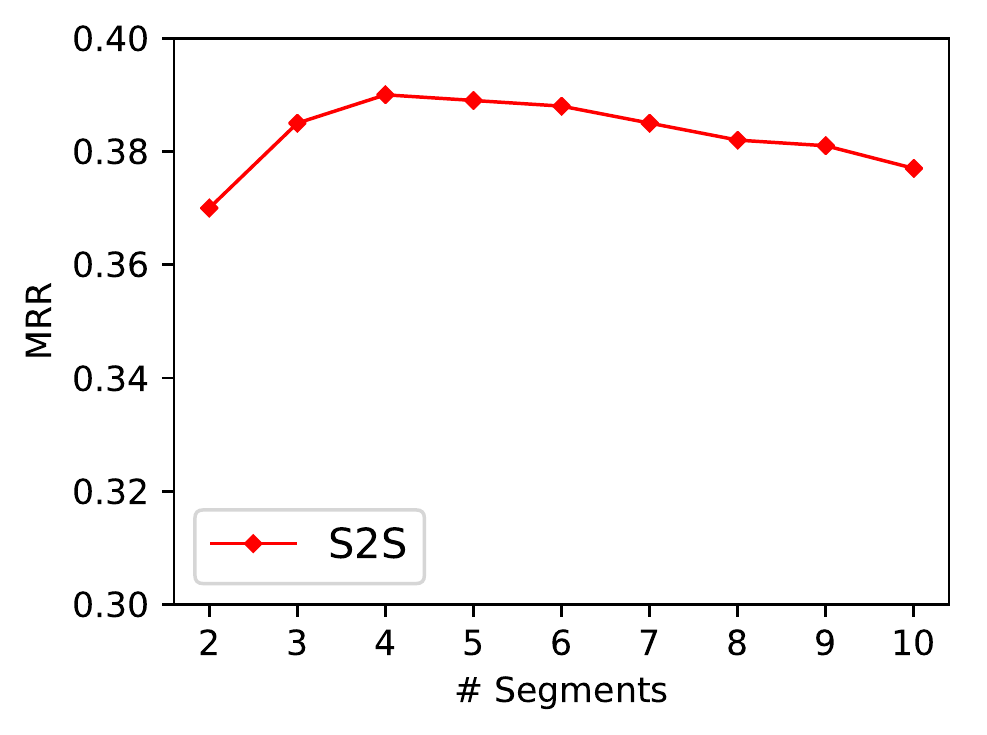}}
	\subfigure[JF17K-4.]
	{\includegraphics[width=0.45\linewidth]{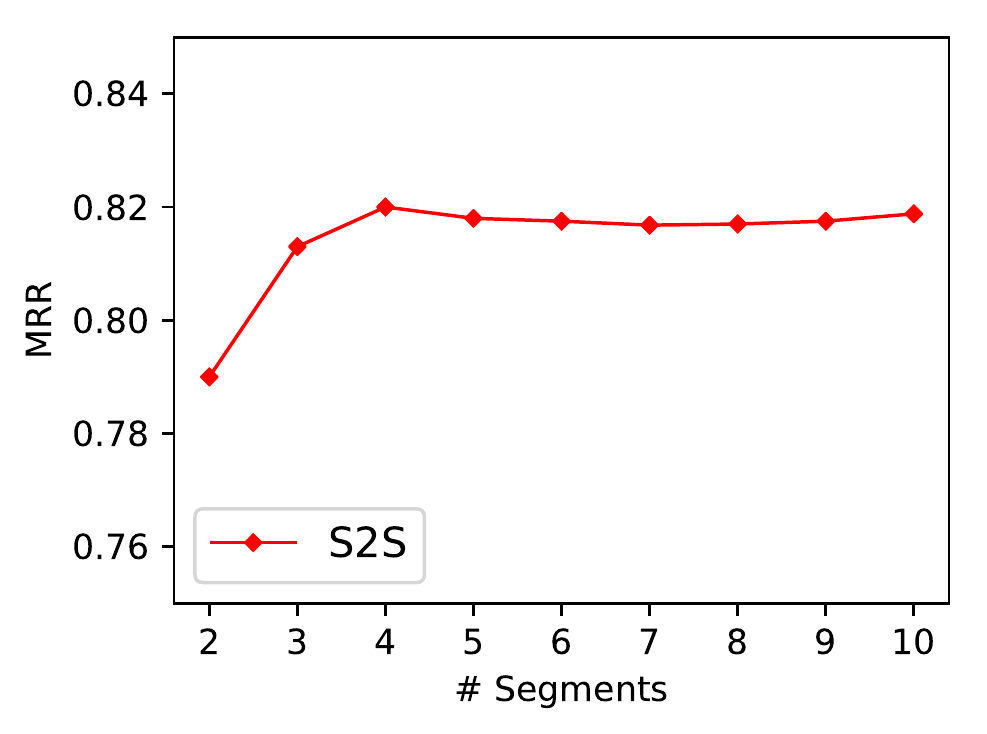}}
	%\vspace{-10px}
	\caption{The effects of the number of segments in S2S.}
	\label{figs:exp:segment}
\end{figure}

\subsubsection{Single v.s. Bi-level Formulation}
We follow NAS to formulate Definition~\ref{def:problem} into a bi-level optimization problem.
To investigate the impact of optimization level, we add a variant of S2S named S2S(sig), which optimizes \eqref{eq:problem} based on training data $S_{\text{tra}}$.
As shown in Table~\ref{table:ablationLevel}, the effect of S2S(sig) is generally lower than S2S.
That is because using validation data to optimize $\bm{\theta}$ will encourage the model to find core tensors that generalize well, rather than fitting the training data well.
%\footnote{+qm+ try to show this part. \# shimin: added}

\begin{table}[H]
\centering
\caption{The link prediction performance of S2S(sig).}
%\vspace{-10px}
\label{table:ablationLevel}
\begin{tabular}{c|cc|cc}
	\toprule
	\multirow{2}{*}{data set}         &      \multicolumn{2}{c|}{S2S(sig)}  &      \multicolumn{2}{c}{S2S}    \\
	&        MRR     &   Hits@10       & MRR & Hits@10  \\ \midrule
	WikiPeople-3 & 0.377 & 0.545  &0.386 & 0.559\\ 
	WikiPeople-4 & 0.380 & 0.592  & 0.391 & 0.600\\ 
	JF17K-3 & 0.727 & 0.839  & 0.740 & 0.860 \\ 
	JF17K-4 & 0.800 & 0.908 & 0.822 &0.924 \\ \bottomrule
\end{tabular}
\end{table}

%In this paper, Definition 2 formulates the problem with a bi-level optimization objective. As stated in Section III-B, bi-level optimization can benefit ERAS by updating the scoring functions and embeddings separately. To investigate the impact of optimization level, we add another variant of ERAS as ERASsig, which utilizes the training set to update (3) in Definition 2 (i.e., single-level problem).

\section{Conclusion}

In this paper, we propose a new tensor decomposition model,
i.e., 
S2S, 
to learn embedding from the N-ary relational data. 
First, to alleviate the data-sparsity issue, we propose to segment embeddings into multiple parts and share them across arities by different segments.
Then, 
the proposed tensor decomposition model is able to learn from the N-ary relational data with mixed arity. 
Next, 
we present a new sparsifying method to address the over-parameterization issue in existing tensor decomposition models but maintain the expressiveness.
Experimental results on benchmark data sets demonstrate the effectiveness and efficiency of our proposed model S2S.

%{\color{blue}
For future works, one interesting direction is to incorporate the N-ary relational data into kinds of applications.
For example, \cite{cao2019unifying} applies the link prediction task on KGs to the recommendation system.
However, it only leverages the binary relational data, which is a special form of N-ary relational data.
Since this paper provides a light way to handle the N-ary relational data,
we may be able to leverage the web-scale KBs to improve the performance of those applications.
Another direction worth trying is to model the N-ary relational data with multi-relational hypergraphs  and apply graph neural networks~\cite{yadati2020neural}.
It could be a more natural way to model the web-scale KBs instead of multiple tensors.
%}

\section{Acknowledgements}
This work is partially supported by National Key Research and Development  Program of China Grant no. 2018AAA0101100, the Hong Kong RGC GRF Project 16202218 , CRF Project C6030-18G, C1031-18G, C5026-18G, AOE Project AoE/E-603/18, China NSFC No. 61729201, Guangdong Basic and Applied Basic Research Foundation 2019B151530001, Hong Kong ITC ITF grants ITS/044/18FX and ITS/470/18FX, Microsoft Research Asia Collaborative Research Grant, Didi-HKUST joint research lab project, and Wechat and Webank Research Grants.

%\footnote{+qm+ clean the ref. check short form of names. keep consistency.}

%%
%% The acknowledgments section is defined using the "acks" environment
%% (and NOT an unnumbered section). This ensures the proper
%% identification of the section in the article metadata, and the
%% consistent spelling of the heading.

%%
%% The next two lines define the bibliography style to be used, and
%% the bibliography file.
\bibliographystyle{ACM-Reference-Format}
\bibliography{sample-base}

%%
%% If your work has an appendix, this is the place to put it.

\cleardoublepage
\appendix
\section{Proof of Theorem~\ref{theorem1}}
\label{appendix:theory}
%\subsection{Proof of Theorem 1}
We first introduce two lemmas that will be used in the proof of Theorem~\ref{theorem1}.

\begin{lemma} \label{lemma1}
Given any N-ary relational data $S$ on the entity set $E$ and relation set $R$, n-CP \cite{lacroix2018canonical} can accurately represents the ground truth with $|S|$-dimensional embeddings, such as $\bm{E}, \bm{R} \in \mathbb{R}^{|S|}$. 
\end{lemma}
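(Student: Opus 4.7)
The plan is to construct the embeddings by hand so that each of the $|S|$ facts is encoded by its own coordinate of the embedding space. I would enumerate the facts as $S=\{s_1,\ldots,s_{|S|}\}$ with $s_t=(r^{(t)},e_1^{(t)},\ldots,e_{n_t}^{(t)})$, and dedicate the $t$-th dimension of the embedding to $s_t$. The n-CP score of a tuple then decomposes as a sum of per-dimension contributions, and the goal is to set things up so that dimension $t$ contributes exactly $1$ to $f(s_t)$ and $0$ to every tuple outside $S$.

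Concretely, within dimension $t$ I would define $[\bm R]_{r,t}=a_t$ if $r=r^{(t)}$ and $0$ otherwise, and $[\bm E]_{e,t}=b_{e,t}$ if $e\in\{e_1^{(t)},\ldots,e_{n_t}^{(t)}\}$ and $0$ otherwise, with nonzero scalars $a_t,b_{e,t}$. Because any dimension $u\neq t$ either has $[\bm R]_{r^{(t)},u}=0$ (when $r^{(u)}\neq r^{(t)}$) or has $[\bm E]_{e_j^{(t)},u}=0$ for some $j$ (when some entity of $s_t$ is absent from $s_u$), almost every cross-term of $f(s_t)=\sum_u[\bm R]_{r^{(t)},u}\prod_j[\bm E]_{e_j^{(t)},u}$ vanishes. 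Normalising so that $a_t\prod_j b_{e_j^{(t)},t}=1$ then delivers a contribution of $1$ from dimension $t$, and any tuple whose relation or entity set is incompatible with every $s_u$ receives score $0$ by the same reasoning.

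The main obstacle is the residual cross-contributions coming from pairs $(s_t,s_u)$ with $r^{(t)}=r^{(u)}$ whose entity sets nest inside one another, including the case in which one fact is a position-permutation of another (to which n-CP is symmetric). To defuse these, I would totally order the facts by a refinement of the "entity-multiset-contains" partial order so that the $|S|\times|S|$ matrix of cross-contributions becomes triangular, and then solve for $\{a_t,b_{e,t}\}$ inductively along that order: at step $t$, the scalars $a_t$ and $b_{e_j^{(t)},t}$ are still unconstrained, and can therefore be chosen to cancel the contributions already fixed by dimensions $1,\ldots,t-1$ and force $f(s_t)$ to equal its target value. Combining this inductive construction with the vanishing of the "generic" cross-terms established above yields an $|S|$-dimensional embedding for which the n-CP scoring function reproduces the ground-truth tensor of $S$ exactly, proving the claim. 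The delicate point, and the step I expect to require the most care, is verifying that the chosen refinement of the partial order is genuinely triangular and that the inductive linear system remains solvable at every step.
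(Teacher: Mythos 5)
Your core construction---one embedding dimension per fact, supported on exactly the relation and the entities of that fact---is the same as the paper's. Where you diverge is in what you demand of the scores. The paper only asks for a thresholded decision: all nonzero entries are set to $1$, every per-dimension contribution is then a nonnegative integer, a fact in $S$ scores at least $1$ from its own dimension, and the cross-contributions you spend most of your effort on are harmless because they can only push a true fact further above the threshold. Under that reading of ``accurately represents,'' the whole triangular-cancellation apparatus is unnecessary and the proof is three lines.

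The genuine gap in your plan is the second half of your goal, ``exactly $0$ on every tuple outside $S$.'' Your free scalars $a_t,b_{e,t}$ sit in dimensions dedicated to facts of $S$, so the induction can only prescribe the $|S|$ values $f(s_1),\dots,f(s_{|S|})$; it gives you no handle on the scores of non-facts, and some of those provably cannot be sent to $0$. With a single shared entity matrix $\bm{E}\in\mathbb{R}^{n_e\times|S|}$ (which is what the lemma states), the n-CP score $\sum_u[\bm{r}]_u\prod_j[\bm{e}_j]_u$ is invariant under permuting the entity positions, so if $(r,a,b)\in S$ while $(r,b,a)\notin S$, the two tuples receive identical scores under \emph{every} choice of embeddings; no ordering or cancellation can separate them. (Your instinct that permutations are the delicate case is sound---the paper's own step ``a dimension with all ones identifies the $k$-th fact'' quietly skips the same case---but your construction relocates the difficulty rather than resolving it.) A secondary issue: if two facts in $S$ share a relation and have mutually containing entity multisets, the cross-contribution matrix has a symmetric off-diagonal block that no refinement of your partial order can triangularize, so even the ``$f(s_t)=1$ for all $t$'' half of the induction does not go through as stated. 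Dropping exactness in favour of the paper's $\geq 1$ decision rule dissolves both problems and reduces your construction to the paper's.
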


\begin{proof}
For any $k$-th fact in N-ary relational data $S$, such that $s = (r_{i_r}, e_{i_1}, \dots, e_{i_n})$.
Let the $k$-th element of $\bm{r}_{i_r}, \bm{e}_{i_1}, \dots, \bm{e}_{i_n}$ be 1, and set the $k$-th element of other $\bm{r}\in \bm{R}$ and $\bm{e}\in \bm{E}$ not involved in $s$ to 0.
Then, n-CP \cite{lacroix2018canonical} can accurately predict the given fact $s= (r_{i_r}, e_{i_1}, \dots, e_{i_n})$ is plausible if and only if $\langle \bm{r}_{i_r}, \bm{e}_{i_1}, \dots, \bm{e}_{i_n} \rangle \geq 1$, otherwise the fact is not fake.
If $\langle \bm{r}_{i_r}, \bm{e}_{i_1}, \dots, \bm{e}_{i_n} \rangle \geq 1$, there must at least have one dimension $k$ leads to $[\bm{r}_{i_r}]_{k} = [\bm{e}_{i_1}]_{k} = \dots = [\bm{e}_{i_n}]_{k} = 1$. Therefore, the given fact $(r_{i_r}, e_{i_1}, \dots, e_{i_n})$ is the $k$-th fact in the data $S$.
Similarly, if $(r_{i_r}, e_{i_1}, \dots, e_{i_n})$ exists, there must have $\langle \bm{r}_{i_r}, \bm{e}_{i_1}, \dots, \bm{e}_{i_n} \rangle \geq 1$.
\end{proof}

\begin{lemma}\label{lemma2}
The n-CP \cite{lacroix2018canonical} can be viewed as a special case of the S2S sparse core tensor.
\end{lemma}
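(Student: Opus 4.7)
The plan is to exhibit an explicit choice of the sparse core tensor $\mathcal{Z}^n$ for which the S2S scoring function in \eqref{eq:mix} collapses to the n-CP scoring function, so that any n-CP model can be reproduced inside the S2S family. By Lemma~\ref{lemma1}, this immediately transfers n-CP's expressiveness into S2S.

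First I would fix the indexing. For a fact of arity $n$, the sparse core tensor $\mathcal{Z}^n = \{\mathcal{Z}_k^n\}_{k=1}^{K}$ has $K = m^{n+1}$ sub-tensors, indexed by the segment tuple $(j_r, j_1, \dots, j_n) \in \{1, \dots, m\}^{n+1}$ with $m = \min\{n, M\}$; each $\mathcal{Z}_k^n$ is drawn from $\texttt{OP} = \{-\mathcal{I}_1^n, \mathcal{I}_0^n, \mathcal{I}_1^n\}$ by Definition~\ref{def:searchspace}. I would then propose the assignment
\begin{equation*}
\mathcal{Z}_{(j_r, j_1, \dots, j_n)}^n \;=\;
\begin{cases} \mathcal{I}_1^n & \text{if } j_r = j_1 = \cdots = j_n, \\ \mathcal{I}_0^n & \text{otherwise}, \end{cases}
\end{equation*}
so that only the $m$ sub-tensors sitting on the ``segment diagonal'' are active.

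Next I would substitute this choice into \eqref{eq:mix}. Since the $\mathcal{I}_0^n$ contributions vanish identically, the outer sum reduces to the $m$ matching tuples. For each such tuple $j$, the definition of $\mathcal{I}_1^n$ (diagonal, with $1$ on the super-diagonal, $0$ elsewhere) gives
\begin{equation*}
\mathcal{I}_1^n \times_1 \bm{r}_{i_r}^{j} \times_2 \bm{e}_{i_1}^{j} \times_3 \cdots \times_{n+1} \bm{e}_{i_n}^{j} \;=\; \langle \bm{r}_{i_r}^{j}, \bm{e}_{i_1}^{j}, \dots, \bm{e}_{i_n}^{j} \rangle ,
\end{equation*}
and summing over $j \in \{1, \dots, m\}$ recovers $\langle \bm{r}_{i_r}, \bm{e}_{i_1}, \dots, \bm{e}_{i_n} \rangle$ restricted to the first $md/M$ coordinates, which is precisely the n-CP scoring function on those coordinates. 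Choosing $d$ large enough that $md/M \geq |S|$ and placing the n-CP embeddings constructed in the proof of Lemma~\ref{lemma1} in those coordinates completes the simulation.

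The hard part is not the algebra but keeping the index bookkeeping consistent: tracking how the flat index $k \in \{1, \dots, m^{n+1}\}$ corresponds to the tuple $(j_r, j_1, \dots, j_n)$, verifying that the mode products indeed collapse to a scalar dot product on a single segment, and checking that the construction is uniform over arities $n$ so that the embedding-sharing scheme of Section~\ref{ssec:embedShare} is respected. Once these conventions are pinned down, the diagonal-only assignment above makes the equivalence essentially immediate, and the lemma follows.
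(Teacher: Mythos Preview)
Your proposal is correct and follows essentially the same route as the paper: set $\mathcal{Z}^n_{(j_r,j_1,\dots,j_n)}=\mathcal{I}_1^n$ on the segment diagonal and $\mathcal{I}_0^n$ elsewhere, then observe that the surviving terms sum to the segmented multi-linear inner product that defines n-CP. Your extra remark about choosing $d$ so that $md/M\ge |S|$ is not strictly needed for Lemma~\ref{lemma2} itself but correctly anticipates how the lemma feeds into Theorem~\ref{theorem1}.
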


\begin{proof}

Given the embedding $\bm{H} = \{\bm{E} \in \mathbb{R}^{n_e\times d}, \bm{R}\in \mathbb{R}^{n_e\times d}\}$, we first segment the embedding into $m$ parts, such as $\bm{e}_{i} = [\bm{e}_{i}^{(1)}; \cdots; \bm{e}_{i}^{(m)}]$.
Then, n-CP's \cite{lacroix2018canonical} scoring function to measure $s=(r_{i_r},e_{i_1},\dots,e_{i_n})$ is defined as: 	
\begin{align}\label{eq:11}
\!
f(s, \bm{H}) 
\!=\! 
\left\langle 
\bm{r}_{i_r}, 
\bm{e}_{i_1},
\dots,
\bm{e}_{i_n}
\right\rangle 
\!=\! 
\sum_{j=1}^m\!
\left\langle 
\bm{r}_{i_r}^{(j)}, 
\bm{e}_{i_1}^{(j)},
\dots,
\bm{e}_{i_n}^{(j)}
\right\rangle.
\end{align}
Next we prove that \eqref{eq:11} is a special case of S2S's scoring function, which is initially defined with a sparse core tensor $\mathcal{Z}^n = \{\mathcal{Z}^n_k\}_{k=1}^K$ as:
\begin{equation}
\label{eq:12}
f_z(\bm{H}, s; \mathcal{Z}^n)
\!=
\!\!\!\!\!\!\!\sum_{j_r,j_1,\dots,j_{n}}
\!\!\!\!
\mathcal{Z}_k^n 
\!\times_1\! \bm{r}_{i_r}^{j_r}
\!\times_2\! \bm e_{i_1}^{j_1} 
\!\times_3\! \cdots 
\!\times_{n+1}\! \bm{e}_{i_n}^{j_n},
\end{equation}
where $j_r,j_1,\dots,j_r\in\{1,\dots,m\}$ and $\mathcal{Z}_k^n\in \text{\tt OP}=\{\mathcal{I}_{-1}^n, \mathcal{I}_0^n, \mathcal{I}_1^n\}$.
Because $\mathcal{I}_{v}$ is super-diagonal with $v$, 
\eqref{eq:12} actually perform the tensor computation as follows:
\begin{align*}
f_z(\bm{H}, s; \mathcal{Z}^n)
&=\!\sum_{j_r,j_1,\dots,j_{n}}\! \mathcal{Z}_k^n \times_1 \bm{r}_{i_r}^{j_r}\times_2 \bm e_{i_1}^{j_1} \times_3 \cdots \times_{n+1} \bm{e}_{i_n}^{j_n},\\
& = \sum_{j_r,j_1,\dots,j_{n}} 
v
\cdot 
\left\langle 
\bm{r}_{i_r}^{(j_r)}, \bm{e}_{i_1}^{j_1},\dots,\bm{e}_{i_n}^{j_n}
\right\rangle.
\end{align*}
Then, we let $v=1$ if and only if $j_r=j_1=\dots=j_n$. The above equation will converted to:
\begin{equation*}
f_z(\bm{H}, s; \mathcal{Z}^n)
 = \!\!\!\! \sum_{j_r=j_1\dots=j_{n}=1}^m  \!\!\!\!
1
\cdot 
\left\langle 
\bm{r}_{i_r}^{(j_r)}, \bm{e}_{i_1}^{j_1},\dots,\bm{e}_{i_n}^{j_n}
\right\rangle,
\end{equation*}
that is exactly same with $f(s,\bm{H})$ in \eqref{eq:11}.
Therefore, n-CP \cite{lacroix2018canonical} is actually a special case of S2S.
\end{proof}

According to Lemma~\ref{lemma1}, n-CP \cite{lacroix2018canonical} is expressive enough to handle any N-ary relational data, and n-CP is a special case of S2S as shown in Lemma~\ref{lemma2}.
Therefore, S2S has the sparse core tensor to represent the ground truth of any N-ary relational data.
%
%\subsection{Part One}
%
%Lorem ipsum dolor sit amet, consectetur adipiscing elit. Morbi
%malesuada, quam in pulvinar varius, metus nunc fermentum urna, id
%sollicitudin purus odio sit amet enim. Aliquam ullamcorper eu ipsum
%vel mollis. Curabitur quis dictum nisl. Phasellus vel semper risus, et
%lacinia dolor. Integer ultricies commodo sem nec semper.

\section{Experimental Implementation}
\subsection{Embedding Sharing in Other Tensor Decomposition Models}
\label{appendix:embedShare}

In Section~\ref{sssec:ablationEmbedShare}, we implement the embedding sharing idea mentioned in Sec~\ref{ssec:embedShare} into other tensor decomposition models.
Here we briefly introduce the exact implementation.

Same with S2S, given the maximum arity $N$ and number of segments $M$, we segment embeddings into 
$M$ splits, 
i.e., $\bm{e}_i = [\bm{e}^{1}_i; \dots; \bm{e}^{M}_i ]$.
Then, given a fact $s$ with arity $n$, we utilize first $m$-th (i.e., $m=\min\{n,M\}$) segments of embeddings to compute the score in DistMult~\cite{yang2014embedding} and GETD~\cite{liu2020generalizing}.
The corresponding DistMult's scoring functions is defined as:
\begin{equation*}
f(s, \bm{H}) 
=
\sum_{j=1}^{m}
\left\langle 
\bm{r}_{i_r}^j, 
\bm{e}_{i_1}^j,
\dots,
\bm{e}_{i_n}^j
\right\rangle.
\end{equation*}
Moreover, 
the TuckER's scoring function is defined as:
\begin{align*}
f(s, \bm{H})
&=
%\sum_{j_r,j_1,\dots,j_{n}=1}^{m}
\mathcal{G}^{n}
\times_1 \bm{r}_{i_r}^{1:m}
\times_2 \bm e_{i_1}^{1:m} 
\times_3 \cdots \times_{n+1} \bm{e}_{i_{n}}^{1:m}\\
&\approx
\text{TR}\left(\mathcal{W}_1,\cdots,\mathcal{W}_c\right)
\times_1 \bm{r}_{i_r}^{1:m}
\times_2 \bm e_{i_1}^{1:m} 
\times_3 \cdots \times_{n+1} \bm{e}_{i_{n}}^{1:m}
\end{align*}
where $\bm{r}_{i_r}^{1:m}, \bm e_{i}^{1:m}$ represent the vector with first $m$-th segments (e.g., $\bm e_{i}^{1:m} = [\bm{e}_i^1;\dots;\bm{e}_i^m] $), and
$\mathcal{G}^{n}$ is a $n+1$-order Tucker core tensor with size $\nicefrac{md}{M}$ (e.g., $\mathcal{G}^{2}\in \mathbb{R}^{\nicefrac{2d}{M}\times\nicefrac{2d}{M}\times\nicefrac{2d}{M}}$).
Then, $\text{TR}\left( \cdot \right)$ is achieved by Tensor Ring computation~\cite{zhao2016tensor} as mentioned in Section~\ref{ssec:relatedNary}.

\subsection{Sparsify Core Tensor with L0 Constraint}
\label{appendix:L0}
Here we introduce 
the details of S2S(L0-reg), i.e.,
how to sparsify the core tensor with $\ell_0$ constraint as in Section~\ref{sssec:ablationL0}.
To optimize the core
We first give the optimization objective as:
\begin{equation*}
\arg\min_{\mathcal{Z}} 
L(\bm{H}, \mathcal{Z}; S_{\text{val}})
+
\epsilon
\left\Vert
\mathcal{Z}
\right\Vert_0,
\end{equation*}
where $\epsilon$ is a trade-off weight for the multi-class log loss $L$ and regularization.
The $\ell_0$ norm penalizes the number of non-zero entries in the core tensor $\mathcal{Z}=\{\mathcal{Z}^n\}_{n=2}^N$ (e.g., $\mathcal{Z}^n_{j_r,j_1,\dots,j_n}\neq0$).
Note that S2S(L0-reg) has the same number of non-zero elements as S2S, i.e., $m^{n+1}$ for $\mathcal{Z}^n$.
Optimizing above objective 
is 
computationally intractable
because of its non-differentiability and the exponential complexity.
To minimize the objective,
we adopt the technique proposed in \cite{louizos2017learning},
which utilizes the reparameterization trick to make it differentiable.

\end{document}